\documentclass[titlepage]{article}
\usepackage[margin=1.25in]{geometry}

\usepackage{authblk}

\usepackage{bbm}

\usepackage[font=small,labelfont=bf]{caption}

\renewcommand{\thefootnote}{}
\title{A Gentle Introduction to Gradient-Based Optimization and Variational Inequalities for Machine Learning}
\makeatletter
\renewcommand\AB@authnote[1]{\rlap{\textsuperscript{\normalfont#1}}}

\makeatother
\author[1*]{Neha S. Wadia\thanks{* neha.wadia@berkeley.edu}}
\author[2]{ Yatin Dandi}
\author[3]{Michael I. Jordan}
\affil[1]{\small Center for Computational Mathematics, Flatiron Institute}
\affil[2]{SPOC Laboratory and IdePHICS Laboratory, Ecole Polytechnique F\'{e}d\'{e}rale de Lausanne (EPFL)}
\affil[3]{Department of EECS and Department of Statistics, University of California, Berkeley}

\usepackage{amsthm}
\usepackage{amsfonts}

\newtheorem{theorem}{Theorem}[section]
\theoremstyle{definition}
\newtheorem{definition}{Definition}[section]
\newtheorem{lemma}[theorem]{Lemma}

\theoremstyle{definition}
\usepackage{algorithm}
\usepackage{algpseudocode}
\usepackage{hyperref}

\usepackage{amsmath}
\DeclareMathOperator{\proj}{proj}

\usepackage{tikz}
\usetikzlibrary{intersections}
\usetikzlibrary{arrows,decorations.pathmorphing,backgrounds,positioning,fit,petri}
\usetikzlibrary{arrows,positioning,angles,arrows.meta,calc,intersections}

\usepackage{xcolor}

\hypersetup{
    colorlinks,
    linkcolor={red!50!black},
    citecolor={blue!50!black},
    urlcolor={blue!80!black}
}

\definecolor{purple}{rgb}{0.5,0,0.5}
\definecolor{mulberry}{rgb}{0.77,0.29,0.55}
\definecolor{darkgreen}{rgb}{0,0.5,0}

\usepackage{pgfplots}

\begin{document}

\maketitle
\renewcommand{\thefootnote}{\arabic{footnote}}

\begin{abstract}
The rapid progress in machine learning in recent years has been based on a highly productive
connection to gradient-based optimization.  Further progress hinges in part on a shift in focus from pattern recognition to decision-making and multi-agent
problems.  In these broader settings, new mathematical challenges emerge that involve equilibria
and game theory instead of optima.  Gradient-based methods remain essential---given the high
dimensionality and large scale of machine-learning problems---but simple gradient descent
is no longer the point of departure for algorithm design.  We provide a gentle introduction to a broader framework
for gradient-based algorithms in machine learning, beginning with saddle points and monotone
games, and proceeding to general variational inequalities.  While we provide convergence proofs
for several of the algorithms that we present, our main focus is that of providing motivation
and intuition.
\end{abstract}

\tableofcontents

\pagebreak

\section*{Acknowledgements}
These notes are based on three lectures delivered by Michael Jordan at the summer school ``Statistical Physics and Machine Learning" held in Les Houches, France, in July 2022. Neha Wadia and Yatin Dandi were students at the school. The authors are grateful to Florent Krzakala and Lenka Zdeborov\'{a} for organizing the school.

The authors thank Sai Praneeth Karimireddy for useful discussions clarifying the material on monotone operators, and Sidak Pal Singh for collaboration in the early stages of this manuscript.

MJ was supported in part by the Mathematical Data Science program of the Office of Naval Research under grant number N00014-18-1-2764 and by the Vannevar Bush Faculty Fellowship program under grant number N00014-21-1-2941. NW's attendance at the summer school was supported by a UC Berkeley Graduate Division Conference Travel Grant. The Flatiron Institute is a division of the Simons Foundation.

Author contributions: MJ prepared and delivered the lectures and edited the text. YD partially drafted an early version of the first lecture and made the figures. NW wrote the text and edited the figures.

\pagebreak

\listofalgorithms

\pagebreak

\section{Introduction}

Much research in current machine learning is devoted to ``solving'' or even surpassing human intelligence, with a certain degree of opacity about what that means. A broader perspective shifts the focus from the intelligence of a single individual to instead consider the intelligence of the collective. The goals at this level of analysis may take into consideration overall flow, stability, robustness, and adaptivity. Desired behavior may be formulated in terms of tradeoffs and equilibria rather than optima. Social welfare can be an explicit objective, and we might ask that each individual be better off when participating in the system than they were before.  Thus, the goal would be to \emph{enhance} human intelligence instead of supplanting or superseding it.

The mathematical concepts needed to support such a perspective incorporate the pattern recognition and optimization tools familiar in current machine learning research, but they go further, bringing in dynamical systems methods that aim to find equilibria and bringing in ideas from microeconomics such as incentive theory, social welfare, and mechanism design. A first point of contact between such concepts and machine learning recognizes that learning systems are not only pattern recognition systems, but they are also decision-making systems. While a classical pattern recognition system may construe decision-making as the thresholding of a prediction, a more realistic perspective views a single prediction as a constituent of an active process by which a decision-maker gathers evidence, designs further data-gathering efforts, considers counterfactuals, investigates the relevance of older data to the current decision, and manages uncertainty.  A still broader perspective takes into account the interactions with other decision-makers, particularly in domains in which there is scarcity.  These interactions may include signaling mechanisms and other coordination devices.  As we broaden our scope in this way, we arrive at microeconomic concerns, while retaining the focus on data-driven, online algorithms that is the province of machine learning. 

The remainder of our presentation is organized as follows. We first expand on the limitations of a pure pattern-recognition perspective on machine learning, bringing in aspects of decision-making and social context. We then turn to an exemplary microeconomic model---the multi-way matching market---and begin to consider blends of machine learning and microeconomics. Finally, we turn to a didactic presentation of some of the mathematical concepts that bridge between the problem of finding optima---specifically in the machine-learning setting of nonconvex functions and high-dimensional spaces---and that of finding equilibria.

\subsection{The Challenges of Decision-Making Processes}
\label{sec:lec1-challenges-of-decision-making}

Here is a partial list of obvious challenges that must be addressed when the predictions 
of pattern recognition systems are used in the context of decision-making.
All of these are active areas of research in current machine learning.
\begin{enumerate}
    \item Uncertainty quantification. 
    It is important to be able to give calibrated notions of confidence in the predictions of a model in order to effectively use those predictions to make decisions.
    However, it is not always clear how to systematically assign uncertainty to model predictions, partially because our understanding of the properties of modern model architectures is incomplete.
    Moreover, the traditional and well-studied methods of classical statistics such as the jackknife and the bootstrap are not computationally efficient at the scale of modern applications.
    In Section \ref{sec:lec1-conformal-prediction}, we will discuss some new ideas that help address this problem.
    \item Robustness and adversaries.
    It is important to build models that can provably maintain performance under distribution shift, i.e., differences between test and training data, including differences that are imposed adversarially.
    \item Biases and fairness. We would like to control for bias in our models and ensure that people are treated fairly.
    \item Explainability and interpretability. In a decision-making context, it is often important for the human beings interacting with a model to be able to interpret how the model arrives at its predictions so that they can reason about the relevance of those predictions to their lives.
\end{enumerate}
All the problems we have listed above involve understanding how to design models whose outputs we can guarantee are interpretable, fair, safe, and reliable \emph{for people to interact with}.
The discipline that has historically dealt with this type of problem is not machine learning but economics.

To further illustrate the point, we discuss two specific examples of scenarios where 
the output of a pattern recognition system
is clearly and naturally part of a decision-making process. 

Suppose your physician has access to a trained neural network that takes as input a data vector containing all your health information and generates an output vector of the predicted risk of your developing various diseases. At your next annual medical check-in, you take a standard battery of tests and your physician updates your data vector with the results and then queries the neural network. Let's focus on just one of the predictions of the network: your risk, represented by a number between 0 and 1, of developing a fatal heart condition within the next five years, the only treatment for which is preemptive heart surgery. Let's say that on average people who score a 0.7 or higher develop the condition. The neural network outputs a 0.71 for you. Is there any sense in which this prediction and the accompanying threshold constitutes a decision? Do you immediately get the surgery?

The most likely scenario is that you sit down with your physician to \emph{reason} through how to \emph{interpret} the result and make the most appropriate choice for your healthcare. A number of questions might reasonably come up. What was the provenance of the training data for the neural network? Were there enough contributions to that data from individuals with similar health profiles to yourself to engender some degree of relevance in the prediction? What is the uncertainty in the prediction? What are the sources of that uncertainty? Over the course of the conversation, you may suddenly remember a piece of information that did not seem relevant before---that you had an uncle who died of a suspected heart condition, let's say---but that is now highly relevant. What is the effect of that new data on your reasoning? Furthermore, you may consider counterfactuals. What if you were to change your diet and swim for thirty minutes every day for six months---would that change the prediction, and would it be worth risking postponing any treatment for that period of time in order to ward off the risk of potentially unnecessary heart surgery?

The neural network can clearly help orient this decision-making process, but we cannot naively threshold the output of the model to make a decision, especially because the stakes in this context are high. We also see that the notion of a tradeoff appears naturally in decision-making contexts.

Let us consider another example. Recommendation systems are some of the most successful pattern recognition systems built to date. Consider recommendation systems for books and movies. Is it problematic if a single book or movie is recommended to a large number of people? Not particularly. Books and movies are not scarce. Today, it is possible to print books on demand and to stream movies on demand. Now let's consider a recommendation system for routes to the airport. Is it problematic to recommend the same route to the airport to a large number of people? In this case, the answer is a clear yes. If everyone is directed down the same route, this will cause congestion. The same applies in the case of a recommendation system for restaurants. Recommending the same restaurant to a large number of people will result in lines around the block at a handful of places and an artificial lack of customers at others. A simplistic approach to fixing this type of problem is to treat recommendation in the context of scarcity as a load-balancing problem. The trouble with this approach is that in many cases, it will not be clear on what basis to assign one subset of people to, say, one restaurant versus another. In fact, any such assignment is likely to be arbitrarily made and will introduce further problems with regards to fairness and also economic efficiency.
Once again, we see that the naive approach to making recommendations fails in certain contexts, in particular, in contexts where a scarce resource is being allocated.

We hope we have convinced the reader that no matter how accurate the predictions of a model may be in some technical sense, naive approaches to using those predictions to make decisions do not work well. We argue that a better way forward is to think of the model and the parties that interact with it as constituting a multi-way \emph{market}.

\subsection{Multi-Way Markets}

Let us continue discussing the example of a recommendation system for restaurants. 
A traditional recommendation system uses all the information at its 
disposal, such as the user's browsing history, location history,
a questionnaire about their culinary preferences that the user fills out when they
sign up to use the system, and average customer ratings of all the restaurants in its
database, to recommend a ranked list of restaurants to the user.
We have already discussed that this approach naturally results in competition 
to obtain a table at a small number of restaurants. The problem gets worse as
the number of people using the system increases.

A more efficient, scalable, and flexible way to structure this problem is as
a two-way market, with the restaurants on one side and the customers on the other.
When the customer wishes to find a restaurant for dinner, they \emph{consent}
to share certain information, such as their current location and a few preferences---cuisine, ambiance, and price range, for example---with the restaurants \emph{temporarily}. 
The restaurants then process the incoming information from multiple customers and \emph{make offers} to them,
and perhaps some of them offer a ten percent discount on the total price of dinner
if the customer arrives in the next fifteen minutes. 
Customers then choose the offer that appeals to them. Recommendations emerge organically from an economic system of information sharing and bidding.

Notice the mathematical structure of this problem, which is to compute an optimal
matching between customers and restaurants while accounting for the \emph{interactions}
between and within both groups, and recognizing the \emph{strategic nature} of the participants.  The overall goals of the system design can be those of classical learning systems, including accuracy, coverage, and precision, but they can also include an economic aspect, such as social welfare or revenue.

Two-way matching markets that are not learning-based have had a significant impact in many social and commercial domains.  
A classic example is the matching algorithm that assigns medical residents to hospitals in the United States every year.
It takes as input a ranked list of hospitals from each 
prospective resident, and a ranked list of prospective residents from each
hospital, and computes an optimal matching. Each side of the market must state
their preferences up front. However, it is now recognized that this may not always be possible. One or both
parties may not know their preferences beforehand. An obvious solution suggests
itself, which is for one or both parties to \emph{learn their preferences from data}.
A new subfield of study, called learning-based mechanism design, is now emerging
to address how to combine learning with matching to design new algorithms that can
do both.

The market we described for restaurant matchings was hypothetical.
However, such markets are beginning to become a reality.
An example is the company United Masters, which provides a platform
to match musicians, especially young musicians who may not be represented
by a record label, with their listeners. The platform is structured
as a \emph{three-way} market, involving the musicians, listeners, and 
brands. The latter can use the platform to locate and make contracts 
directly with specific musicians in order to use their music for 
branding purposes. 
The digital marketplace also provides artists with data that informs
them of where the highest numbers of listeners of their music are 
located; they can use this information, for example, to convince venue
owners in the relevant locations to sign them for paid performances. Overall this is a matching system in which data analysis and fairness issues are key.  

Creating markets that work as intended is not a simple task. 
We have already hinted at some of the mathematical challenges involved
in our discussion of restaurant recommendations and United Masters;
problems such as identifying and aligning incentives, designing matching
algorithms in the face of unknown or partially known preferences, and
coming up with alternative business models to advertising-based revenue
generation. In fact, there is a great deal of 
\emph{new} mathematics that will need to be invented in order to enable the
effective design of markets that are beneficial for all parties.

\subsection{Challenges at the Intersection of Machine Learning and Economics}

We provide a few rough and ready examples of mathematical challenges that arise when we
try to develop new machine learning algorithms that bring in ideas from game theory,
mechanism design, and incentive theory.
The goal of addressing these challenges is to discover new principles 
to build healthy learning-based markets that are 
stable over long periods of time.

\begin{enumerate}
\item Understanding relationships among optima, equilibria, and dynamics.
Training a pattern recognition model typically involves finding the 
minimum of a loss function. The dynamics of marketplaces, which can be
described using the language of game theory, typically converge to
\emph{equilibria}. Gradient descent, which provably converges to minima, provably fails to compute equilibria; other algorithms, such as the 
extragradient algorithm, are needed for the latter task. 
We will study both these algorithms in the following lectures.

\item Designing multi-way markets in which individual agents must \emph{explore} to learn their preferences. This involves integrating \emph{bandit algorithms} with matching algorithms.

\item Designing large-scale multi-way markets in which agents view the other sides of the market through interaction with recommendation systems. We described an example of such a marketplace in the context of a recommendation system for restaurants.

\item Uncertainty quantification for blackbox and adversarial settings.
Correctly quantifying uncertainty is crucial to construct safe algorithms
and facilitate effective decision-making.

\item Mechanism design with learned preferences.

\item The design of contracts that incentivize agents to provide data in a federated learning setting.

\item Incentive-aware classification and evaluation.

\item Tradeoffs involving fairness, privacy, and statistical efficiency.
\end{enumerate}

In these lectures, we will focus mostly on the first item in this list.
To give the reader a sense of the flavor of research involved in 
addressing some of the other points in the list, however,
we give brief vignettes in the next section of research in strategic classification and 
distribution-free uncertainty quantification.

\subsection{Two Illustrative Examples}
\label{sec:lec1-research-examples}

\subsubsection{Strategic Classification}
\label{sec:lec1-strategic-classification}

Predictive models are routinely deployed in society for various purposes:
for example, an insurance company may use a predictive model to decide whether or not to insure a prospective client and what premium to charge. Similarly, a college admissions committee evaluates applicants based on some set of criteria to decide whether or not to admit them to the college.
When a predictive model is deployed in society, the people who interact with it will naturally strategize to maximize their utility under the model. Such strategic behavior can eventually render the predictions of the model meaningless. In fact, this effect is so well-known that it bears a name---\emph{Goodhart's Law}---in economics. A classic illustration of the effect is given in Fig.\,\ref{fig:poverty} in the context of the poverty index score \cite{camacho2011manipulation}, a measure of poverty that was designed to be thresholded to determine the subpopulation that qualified for social welfare programs. When the threshold was first introduced, the poverty index score was approximately Gaussian-distributed across the population. However, people soon learned to manipulate their scores in such a way as to increase their chances of placing themselves to the left of the threshold. In a mere decade, the entire peak of the distribution of poverty index scores had moved to the left of the threshold, which now qualified almost the whole population for welfare assistance.

Strategic classification is a problem in which a population that is being classified for some purpose may strategically modify their features in order to elicit a favorable classification. This problem finds a natural formulation as a sequential two-player \emph{Stackelberg game} between a player (the decision-maker) deploying the classifier and a player (the strategic agents) being classified. The goal of the decision-maker is to arrive at the Stackelberg equilibrium, which is the model that minimizes the decision-maker's loss after the strategic agents have adapted to it.

In the prevailing formulation of this game, the decision-maker \emph{leads} by deploying a model, and the strategic agents \emph{follow} by immediately playing their best response to the model. It is assumed that the strategic agents adapt to the model instantaneously. In \cite{zrnic2021leads}, the authors formulate the game more realistically by introducing a finite natural timescale of adaptation for the strategic agents. Relative to this timescale, the decision-maker may choose to play on a timescale that is either faster (a ``reactive" decision-maker) or slower (a ``proactive" decision-maker).\footnote{Note that in this language, most prior work dealt with the setting of a proactive decision-maker.} These timescales are naturally introduced to the problem by formulating the game such that each player must use a learning algorithm to \emph{learn} their strategies, as opposed to playing a strategy from a predetermined set of possibilities. Within this formulation of strategic classification, the authors of \cite{zrnic2021leads} found that the decision-maker can drive the game toward one of two different equilibria depending on whether they chose to play proactively or reactively. Thus the dynamics of play directly influences the outcome of the game. Furthermore, the authors were able to identify certain game settings in which both players prefer the equilibrium that results when the decision-maker plays reactively.

The results of \cite{zrnic2021leads} exemplify how incorporating ideas of statistical learning into a game-theoretic problem can yield a richer model of some social phenomenon that is productive of new insights. We refer the reader to \cite{perdomo-etal} for a broader discussion of this emerging field at the interface of learning and economics.

\begin{figure}
    \centering
    \includegraphics[width=0.6\textwidth]{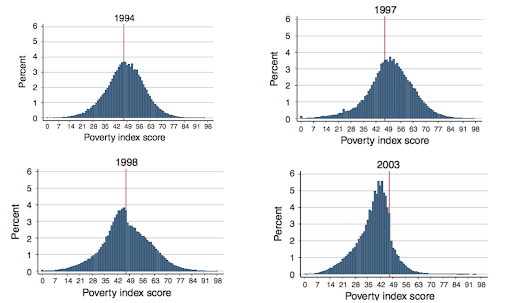}
    \caption{The poverty index score was approximately Gaussian-distributed 
    at the time when it was selected as a measure of poverty that could be thresholded for the purposes of making social policy. With time, its distribution became further and further skewed to the left of the threshold. This figure is reproduced from \cite{camacho2011manipulation}.}
    \label{fig:poverty}
\end{figure}

\subsubsection{Distribution-Free Uncertainty Quantification for Decision-Making}
\label{sec:lec1-conformal-prediction}

As we discussed previously, translating predictions into 
decisions requires reliable uncertainty quantification.
Existing methods in pattern recognition do not provide reliable
error bars, and traditional statistical methods of uncertainty 
quantification such as the bootstrap are prohibitively computationally
intensive at the scales of modern problems.

\emph{Conformal prediction} is a methodology that gives us a way forward.
Given the predictions of a pattern recognition model, such as an image
classifier, a conformal prediction algorithm efficiently computes a 
confidence interval over the outputs of the 
model, thus providing statistical coverage without the need to manipulate 
the model itself or to make strong assumptions about its architecture
or about the distribution of the data.
The interested reader may refer to \cite{angelopoulos2021gentle} for 
an accessible introduction to conformal prediction.

To give the reader a flavor of the mathematics involved in conformal 
prediction, we give a high-level overview of a method of constructing 
\emph{risk-controlling prediction sets} 
\cite{bates2021-risk-controlling-prediction-sets}.
Prediction sets are sets that provably contain the true value of the entity 
being predicted with high probability.
Prediction sets are an actionable form of uncertainty quantification,
and therefore useful for decision-making purposes in certain contexts.
For example, if a physician were to put an image of a patient's colonic 
epithelium through a classifier while trying to diagnose the cause of 
their chronic stomach pain, it would be just as important to know what 
the likely causes of the pain are as to know which causes can safely 
be ruled out.

Consider a dataset $\{(X_i, Y_i)\}_{i=1}^{m}$ of independent, identically 
distributed pairs consisting of feature vectors $X_i\in\mathcal{X}$ and 
labels $Y_i\in\mathcal{Y}$. In many applications, $\mathcal{X}=\mathbb{R}^d$
for some large $d\in\mathbb{N}$. Consider a partition of the dataset into
sets of size $n$ and $m-n$; the former form the \emph{calibration set} $\mathcal{I}_{\text{cal}}$,
the latter the training set $\mathcal{I}_{\text{train}}$.
We are in possession of a trained model $\hat{f}$ (say, a classifier) that was trained on $\mathcal{I}_{\text{train}}$.
The goal is to construct a set-valued predictor $\mathcal{T}_{\lambda}:\mathcal{X}\rightarrow\mathcal{Y}^{\prime}$, where $\mathcal{Y}^{\prime}$ is a space of sets (in many problems, $\mathcal{Y}^{\prime}=2^{\mathcal{Y}}$) and $\lambda\in\mathbb{R}$ is a scalar parameter;
given some user-specified $\gamma, \delta\in\mathbb{R}$, 
for an appropriate choice $\hat{\lambda}$ of $\lambda$, we require the risk $R$ of $\mathcal{T}_{\hat{\lambda}}(X)$ to be bounded above with high probability:
\begin{equation}
    P\left(R\left(\mathcal{T}_{\hat{\lambda}}(X)\right)\leq\gamma\right) \geq 1-\delta.
    \label{eq:lec1-risk-controlled-prediction-set}
\end{equation}
In \cite{bates2021-risk-controlling-prediction-sets}, the authors
provide a procedure by which to build $\mathcal{T}$ and choose $\hat{\lambda}$.

$\mathcal{T}_{\lambda}$ has the following nesting property:
for $\lambda_1 < \lambda_2$, we have 
$\mathcal{T}_{\lambda_1}\subset\mathcal{T}_{\lambda_2}$.
We define a loss function $L:\mathcal{Y}\times\mathcal{Y}^{\prime}\rightarrow\mathbb{R}_{\geq 0}$ on $\mathcal{T}$. 
A typical choice of $L(Y, \mathcal{T}_{\lambda}(X))$ is 
$\mathbbm{1}_{{Y\in\mathcal{T}_{\lambda}(X)}}$.
We assume that the loss function satisfies a monotonicity condition: $\mathcal{T}_{\lambda_1}\subset\mathcal{T}_{\lambda_2}$ implies $L(\mathcal{T}_{\lambda_1})>L(\mathcal{T}_{\lambda_2})$; in other
words, enlarging the prediction set drives down its loss.
The $\emph{risk}$ of the prediction set is the average value of 
the loss over the data: $R\left(\mathcal{T}_{\lambda}(X)\right)=\mathbb{E}L(Y,\mathcal{T}_{\lambda}(X))$.

Using the calibration set $\mathcal{I}_{\text{cal}}$, it is 
possible to construct a quantity $R^{\text{max}}(\lambda)$ such that for all $\lambda$,
$P\left(R\left(\mathcal{T}_{\lambda}(X)\right)\leq R^{\text{max}}(\lambda)\right)\geq 1-\delta$. 
Generally speaking, this construction involves inverting certain
concentration inequalities.
The reader is referred to Section 3 of \cite{bates2021-risk-controlling-prediction-sets} for details.
Once we have $R^{\text{max}}(\lambda)$, taking
\begin{equation}
    \hat{\lambda}=\inf \{\lambda: R^{\text{max}}(\mathcal{T}_{\lambda^{\prime}})<\gamma \,\,\forall\lambda^{\prime}\geq\lambda\}
\end{equation}
ensures that the property \eqref{eq:lec1-risk-controlled-prediction-set} is satisfied. This is the
content of the main theorem of \cite{bates2021-risk-controlling-prediction-sets}.
This formalism is applied to a number of examples in \cite{bates2021-risk-controlling-prediction-sets},
including protein folding, in which case $\hat{f}$ is AlphaFold,
scene segmentation, and tumor detection in the colon.
Note that the assumption of a monotonic loss can be removed, and $\lambda$
can be a low-dimensional vector instead of a number; these generalizations
can be found in \cite{angelopoulos2022-learn-then-test}.

\subsection{Overview of the Lectures}

We hope we have convinced the reader that there are many open mathematical 
challenges involved in combining ideas from machine learning and economics to design
the algorithms we will live among in the future, and that these challenges are worth
tackling. We will devote the rest of these lectures to one of these mathematical 
challenges, namely, the challenge of computing equilibria (in games and other dynamical
systems).

We will present the problem of computing equilibria as a generalization of the problem of 
computing optima.
We will provide a self-contained introduction to both these problems, starting with optima and moving to equilibria.
Our goal is to equip the reader with enough terminology and technical knowledge to read and 
contextualize the primary literature.

Although the algorithms we will discuss are usually implemented in discrete time,
on occasion we will find it useful to study them in continuous time.
As we saw in our discussion of Stackelberg equilibria in Section \ref{sec:lec1-strategic-classification},
the dynamics of an algorithm can influence what the algorithm converges to. This is true
both in optimization and in games, and we will find in each case that the continuous-time
perspective makes this dependency clear. It is also the case that proofs of convergence
are often much simpler to write in continuous time than in discrete time.

We begin with the essentials of convex optimization through the discussion of
two fundamental algorithms: the subgradient method, in Section \ref{sec:lec1-subgrad-method}, 
and gradient descent, in Section \ref{sec:lec1-GD-convex-functions}.
Convex functions are the simplest class of functions for which we can provide
convergence guarantees to the global minimum of the function.
We will see that the subgradient method reduces to gradient descent when the function 
being optimized is differentiable in addition to being convex.
In Section \ref{sec:lec2-GD-nonconvex-functions}, we talk about the convergence properties
of gradient descent on nonconvex functions, which may in general have more than one minimum
and may also have saddle points and maxima.
We will find it useful to have this discussion in continuous time.

Gradient descent is a \emph{first-order} algorithm; its continuous-time limit is a
first-order differential equation in time.
We also provide an exposition on optimization with a description of some recent results on 
\emph{accelerated} algorithms; the continuous-time limits of these algorithms
are second-order differential equations in time, and can be interpreted as the 
equations of motion of a certain Lagrangian.

The dynamics of a marketplace can often be formulated as a game. In order to effectively
design new markets and understand existing ones, it is crucial to be able to compute the
equilibria of the relevant games.
Returning to discrete time, in Section \ref{sec:lec2-finding-nash-equilibria}, we graduate 
from studying optima to studying equilibria. We introduce some basic vocabulary, including
the concept of a variational inequality, and
we show that the Nash equilibria of certain kinds of games can be written as the 
fixed points of variational inequalities involving monotone operators.
In Section \ref{sec:lec3-monotone-operators} we define monotonicity and strong monotonicity,
and in Section \ref{sec:lec3-fixed-point-algorithms} we discuss the proximal point
method and the extragradient algorithm, and their convergence behavior on monotone
operators. We wrap up these lectures with a brief foray back into continuous time
in Section \ref{sec:lec3-highres-ODE}, with a comparative study of a number of 
fixed-point finding algorithms.

Throughout the text, we will omit mathematical details where necessary in the 
interests of clarity and brevity (although we will make note of where we do this). 
The interested reader may find the details of almost everything we discuss here 
in the following references: 
\emph{Convex Analysis} \cite{rockafellar}, 
\emph{Convex Analysis and Monotone Operator Theory in Hilbert Spaces} 
\cite{bauschke2011convex}, 
and \emph{Finite-Dimensional Variational Inequalities and Complementarity Problems} 
\cite{facchinei-pang}.

\subsection{The Subgradient Method and a First Convergence Proof}
\label{sec:lec1-subgrad-method}

We begin our exposition of the mathematics of optimization with
the problem of finding the minimum of a convex function
that need not be differentiable. 
As we have mentioned previously, convexity is the simplest setting in which we can provide convergence guarantees to 
the global minimum of the function. 

Consider the optimization problem
\begin{equation}
    \min_{x\in\mathbb{R}^d} f(x),
    \label{eq:lec1-opt-problem}
\end{equation}
where $f$ is a convex function.
A function is said to be \emph{convex} if the line segment connecting any 
two function values $f(a)$ and $f(b)$ lies above $f(x)$ 
at all points $x$ between $a$ and $b$.
We will give another definition of convexity in the context of differentiable functions 
in the next lecture.
In this section, we do not assume that $f$ is differentiable.
In what follows, we do require some further regularity 
conditions on $f$ beyond convexity (such as convexity of its domain),
but we will not trouble with those details here.

A central notion in convex analysis is that of the \emph{subgradient} of a function.

\begin{definition}[Subgradient]
$g_x$ is a \emph{subgradient} of $f$ at $x$ if,
$\forall y$, $f(y)\geq f(x)+\langle g_x, y-x \rangle$.
This inequality is called the \emph{subgradient inequality}.
\end{definition}

In the definition above and throughout these notes we use $\langle\cdot\rangle$ 
to denote the Euclidean inner product: $\langle x, y\rangle=\sum_{i=1}^dx_iy_i$.

In general the subgradient is not unique. 
For example, the subgradient of the function $f(x) = |x|$ where $x\in\mathbb{R}$ is given by
\begin{equation}
g_x 
\begin{cases}
=1 &x > 0,\\
\in(-1,1) &x = 0,\\
=-1 &x < 0.
\end{cases}
\end{equation}
For this reason, it is also useful to give a name to the set of all
subgradients of a function, which we do next.
\begin{definition}[Subdifferential]
The subdifferential $\partial f(x)$ of a function $f$ at a point $x$ is the set of all subgradients of $f$ at $x$, i.e.,
\begin{equation*}
    \partial f(x)= \{g_x:\forall y, f(y)\geq f(x)+\langle g_x, y-x \rangle\}.
\end{equation*}
\end{definition}

If $f$ is differentiable at $x$, then $\partial f(x)$ contains a single element equal
to the derivative of $f$ at $x$. We remind the reader that in this section
we do not assume that $f$ is differentiable. We are now ready to introduce
the subgradient method for solving \eqref{eq:lec1-opt-problem}.

\begin{algorithm}
\caption{Subgradient Method with Constant Step Size}
\label{alg:lec1-subgrad}
\begin{algorithmic}
\Require $\eta>0, \, \, T>0$\\
\While{$k \leq T$}\\
    $x_{k+1}=x_k-\eta g_k$\\
    $k=k+1$
\EndWhile
\end{algorithmic}
\end{algorithm}
In Algorithm \ref{alg:lec1-subgrad},
$T$ is the number of iterations for which we run the algorithm, 
$\eta$ is the (constant) step size,
$g_k$ is shorthand for $g_{x_k}$, and $g_k\in\partial f(x_k)$.
We will often refer to $x_k$ as ``the $k$-th \emph{iterate}''.

The subgradient method is \emph{not} a descent method: it is not
guaranteed to make progress toward the optimum with each iteration.
This is different from the case of gradient descent, which we will discuss
in the sequel (see Lemma \ref{lem:lec2-descent-GD-smooth}).
Nonetheless, we can prove that it converges in function value with 
a rate $1/\sqrt{T}$.
Let $x^{\star}$ denote the optimum (i.e., the solution to 
\eqref{eq:lec1-opt-problem}), and $||\cdot ||$ the Euclidean norm,
and consider the quantity $||x_{k+1}-x^{\star}||^2$:
\begin{align}
    ||x_{k+1}-x^{\star}||^2 
    &= ||x_k-\eta g_k -x^{\star}||^2 \nonumber\\
    &=||x_k-x^{\star}||^2 - 2\eta \langle g_k, x_k-x^{\star}\rangle + \eta^2 ||g_k||^2 \nonumber\\
    &\leq ||x_k-x^{\star}||^2 -2\eta\left(f(x_k)-f(x^{\star})\right) + \eta^2 G^2.
    \label{eq:lec1-SG-inequality}
\end{align}
To arrive at \eqref{eq:lec1-SG-inequality}, we applied the subgradient inequality 
and assumed that $||g_k||^2$ is bounded above by a constant $G^2$.
We now move $f(x_k)-f(x^{\star})$ to the left-hand side, sum over $k$, 
and divide by $2\eta T$:
\begin{equation}
\frac{1}{T} \sum_{k=1}^T \left(f(x_k)-f(x^{\star})\right)  
\leq \frac{1}{2\eta T}\sum_{k=1}^{T}\left(||x_k-x^{\star}||^2-||x_{k+1}-x^{\star}||^2\right) + \frac{1}{2}\eta G^2.
\label{eq:lec1-subgrad-intermediate-step}
\end{equation}
On the right-hand side of \eqref{eq:lec1-subgrad-intermediate-step}
we have a \emph{telescoping 
sum}, in which all but the first and last terms cancel.
On the left-hand side, we take advantage of the convexity of $f$ and
apply \emph{Jensen's inequality}, which states that $f$ evaluated at the 
average of a set of points $x_i$ is at most the average of the
function values at those points (for $f$ convex). 
We arrive at
\begin{align}
    f\left(\frac{1}{T} \sum_{k=1}^T x_k\right)-f(x^{\star})
    &\leq \frac{1}{2\eta T}\left(||x_1-x^{\star}||^2-||x_{T+1}-x^{\star}||^2\right) + \frac{1}{2}\eta G^2
    \nonumber\\
    &\leq \frac{1}{2\eta T}R^2 + \frac{1}{2}\eta G^2.
    \label{eq:lec1-subgrad-intermediate-step-2}
\end{align}
In the last step above, we dropped the term 
$||x_{T+1}-x^{\star}||^2$ and assumed that $||x_1-x^{\star}||^2$
is bounded above by a constant $R^2$.
To obtain a rate of convergence, we choose $\eta$ to minimize the
right-hand side of the last inequality. The optimal value of $\eta$
turns out to be $\frac{R}{G\sqrt{T}}$. Substituting this in \eqref{eq:lec1-subgrad-intermediate-step-2}, we have
\begin{equation}
    f\left(\frac{1}{T} \sum_{k=1}^T x_k\right)-f(x^{\star})
    \leq \frac{RG}{\sqrt{T}}.
    \label{eq:lec1-subgrad-method-rate}
\end{equation}
We have thus proved the following result.
\begin{theorem}
\label{thm:lec1-subgradient-convex}
On a convex function $f$, the subgradient method converges
in function value on the average iterate with a rate $1/\sqrt{T}$, 
where $T$ is the number of iterations.
\end{theorem}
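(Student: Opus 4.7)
The plan is to track how the squared Euclidean distance $\|x_k - x^\star\|^2$ between the current iterate and an optimum evolves under one step of the update, and then sum the resulting one-step inequality over $k = 1, \dots, T$. The key leverage is the subgradient inequality applied at $y = x^\star$, which converts the inner product $\langle g_k, x_k - x^\star\rangle$ (which appears naturally when we expand the squared norm of $x_{k+1} - x^\star = x_k - \eta g_k - x^\star$) into the suboptimality gap $f(x_k) - f(x^\star)$. This is the crucial move: it lets us turn a statement about iterate displacement into a statement about function values, at the cost of only an additive $\eta^2 \|g_k\|^2$ term from expanding the square.

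Next, I would assume the standard boundedness condition $\|g_k\|^2 \le G^2$ (reasonable for Lipschitz $f$) in order to control the quadratic error term uniformly in $k$. Rearranging, one obtains an inequality of the form $2\eta(f(x_k) - f(x^\star)) \le \|x_k - x^\star\|^2 - \|x_{k+1} - x^\star\|^2 + \eta^2 G^2$. Summing from $k=1$ to $T$ produces a telescoping sum on the right, leaving only $\|x_1 - x^\star\|^2 - \|x_{T+1} - x^\star\|^2$, which I would bound above using $\|x_1 - x^\star\|^2 \le R^2$ and the trivial nonnegativity of the final term. Dividing by $2\eta T$ then gives an average-iterate bound of the form $\tfrac{1}{T}\sum_k (f(x_k) - f(x^\star)) \le \tfrac{R^2}{2\eta T} + \tfrac{\eta G^2}{2}$.

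To turn this into a bound on $f$ evaluated at the averaged iterate $\bar{x}_T := \tfrac{1}{T}\sum_{k=1}^T x_k$, I invoke convexity via Jensen's inequality, which gives $f(\bar{x}_T) \le \tfrac{1}{T}\sum_k f(x_k)$; this is the step that explains why the theorem speaks of the \emph{average} iterate rather than the final one (without further structure, the subgradient method is not a descent method, so one cannot reason about $f(x_T)$ directly). Finally, I would optimize the right-hand side over the constant step size $\eta$: the sum $\tfrac{R^2}{2\eta T} + \tfrac{\eta G^2}{2}$ is minimized at $\eta = \tfrac{R}{G\sqrt{T}}$, at which both terms equal $\tfrac{RG}{2\sqrt{T}}$, yielding the claimed $O(1/\sqrt{T})$ rate.

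The main obstacle, conceptually, is the asymmetry between iterates and function values: the subgradient inequality only gives a one-sided relationship, and noise in the subgradient direction can actually move $x_k$ away from $x^\star$ in a given step. The Jensen step is what papers over this by averaging; without it, one would need an explicit descent property that simply does not hold here. Everything else is routine algebra, with the step-size tuning being a standard AM--GM style optimization.
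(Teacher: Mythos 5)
Your proposal is correct and follows essentially the same route as the paper: expand $\|x_{k+1}-x^{\star}\|^2$, apply the subgradient inequality with the bound $\|g_k\|^2\leq G^2$, telescope, invoke Jensen's inequality on the averaged iterate, bound $\|x_1-x^{\star}\|^2\leq R^2$, and tune $\eta=\frac{R}{G\sqrt{T}}$ to obtain the $\frac{RG}{\sqrt{T}}$ bound. No gaps; your remark that Jensen compensates for the lack of a descent property matches the paper's own commentary.
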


A remarkable fact about \eqref{eq:lec1-subgrad-method-rate} is 
that it has no explicit dependence on the dimension of $x$ (it depends implicity on the dimension through $R$ and $G$). It can further be shown that 
the $1/\sqrt{T}$ rate is tight; i.e., that there exist convex functions 
that saturate the upper bound in \eqref{eq:lec1-subgrad-method-rate}.
Lastly, we note that in order to extract the $1/\sqrt{T}$ rate, we had to choose a step size that
depended on $T$. We will not see this type of
dependence of the step size on the total number of iterations again
in these lectures.

This discussion of the subgradient method introduces
a pattern that will recur as we continue our study of optimization in the 
subsequent sections: we make certain regularity assumptions on 
the function and write down an algorithm to optimize it; we then exploit the
dynamics of the algorithm as well as the regularity properties of
the function to prove convergence.  Typically, we are interested in
the rate at which the distance between the iterate and the optimum
shrinks, and in particular, we are interested in algorithms that
achieve a \emph{fast rate}.

\pagebreak


\section{Computing Optima in Discrete and Continuous Time}

\subsection{Convergence Guarantees for Gradient Descent on Convex Functions}
\label{sec:lec1-GD-convex-functions}

We remind the reader that we are currently discussing algorithms
for the following optimization problem
\begin{equation*}
    \min_{x\in\mathbb{R}^d} f(x).
\end{equation*}
In Section \ref{sec:lec1-subgrad-method}, we assumed that $f$ is convex
but not necessarily differentiable, and we studied the
convergence behavior of the subgradient method (Algorithm \ref{alg:lec1-subgrad})
on this problem.
When $f$ is a \emph{smooth} function of $x$, the subgradient
method reduces to the well-known gradient descent algorithm.

\begin{algorithm}
\caption{Gradient Descent}
\label{alg:lec2-GD}
\begin{algorithmic}
\Require $\eta>0$\\
\State $x_{k+1}=x_k-\eta \nabla f(x_k)$
\end{algorithmic}
\end{algorithm}

We will now give a definition of convexity for differentiable
functions, and then explore the consequences on the convergence behavior
of gradient descent of two further assumptions 
we can make on $f(x)$: (1) Lipschitz smoothness, and (2) strong convexity.
The proof techniques that arise will be useful when we study variational inequalities.

\begin{definition}[Convexity]
A (differentiable) function $f$ is \emph{convex} on a set if and only if
\begin{equation}
    f(x)\geq f(y) + \langle\nabla f(y), x-y\rangle
    \label{eq:lec2-convexity-def}
\end{equation}
for all $x$, $y$ in the set.
\end{definition}

There are several other equivalent definitions of convexity. Similarly, there are many ways to characterize smoothness. One possibility is the following.

\begin{definition}[Smoothness]
\label{lec2-def-smoothness}
A function $f$ is \emph{(Lipschitz) smooth} if it satisfies
\begin{equation}
||\nabla f(x) - \nabla f(y)|| \leq L ||x-y||,
\label{eq:lec2-smoothness-def}
\end{equation}
where $L>0$. Such a function is often referred to as $L$-smooth.
\end{definition}
Note that \eqref{eq:lec2-smoothness-def} is equivalent to
\begin{equation}
f(y) \leq f(x) + \langle\nabla f (x), y-x\rangle +\frac{L}{2}||x-y||^2.
\label{eq:lec2-smoothness}
\end{equation}
The proof of this equivalence is left as an exercise to the reader. 
\emph{Hint:} use the fundamental theorem of calculus and the Cauchy-Schwarz inequality.
The inequality \eqref{eq:lec2-smoothness} indicates that an $L$-smooth
function can be upper-bounded by a quadratic at every point 
$y$ in its domain.
With this intuition it is simple to see why, for example, 
the function $f(x)=|x|$ is \emph{not} smooth: at the point $x=0$,
there is no quadratic that both touches $f$ only at $x=0$ and 
sits above $f$ for all $x>0$.

We will now see that gradient descent on $L$-smooth convex functions 
converges with a rate $1/T$, where $T$ is
the total number of iterations. This is faster than
the $1/\sqrt{T}$ rate than we had in \eqref{eq:lec1-subgrad-method-rate}, when we did not assume smoothness.

We begin by proving a type of result called a \emph{descent lemma}, which 
guarantees that we
make a certain amount of progress in decreasing the function 
value at each iteration of gradient descent.

\begin{lemma}[Descent lemma for gradient descent on smooth convex functions]
\label{lem:lec2-descent-GD-smooth}
Consider an $L$-smooth convex function $f$. Under the dynamics of Algorithm \ref{alg:lec2-GD} with $\eta = L^{-1}$, we have
\begin{equation}
    f(x_{k+1}) \leq  f(x_k) - \frac{1}{2L}||\nabla f(x_k)||^2.
    \label{eq:lec2-descent-GD-smooth}
\end{equation}
\end{lemma}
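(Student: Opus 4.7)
The plan is to apply the smoothness characterization \eqref{eq:lec2-smoothness} directly to the gradient descent update, since that inequality is exactly the tool that lets us bound a function value at a nearby point by local first- and second-order information. Specifically, I would set $y = x_{k+1}$ and $x = x_k$ in \eqref{eq:lec2-smoothness}, and then substitute the update rule $x_{k+1} = x_k - \eta\, \nabla f(x_k)$ from Algorithm \ref{alg:lec2-GD} to convert both the inner-product term $\langle \nabla f(x_k), x_{k+1}-x_k\rangle$ and the squared-norm term $\|x_{k+1}-x_k\|^2$ into multiples of $\|\nabla f(x_k)\|^2$.

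After this substitution I would obtain an inequality of the form
\begin{equation*}
    f(x_{k+1}) \leq f(x_k) - \eta\|\nabla f(x_k)\|^2 + \frac{L\eta^{2}}{2}\|\nabla f(x_k)\|^2
    = f(x_k) - \eta\Bigl(1 - \tfrac{L\eta}{2}\Bigr)\|\nabla f(x_k)\|^2.
\end{equation*}
Setting $\eta = 1/L$ collapses the coefficient to $1/(2L)$, which is precisely the claim \eqref{eq:lec2-descent-GD-smooth}.

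There is no real obstacle here; the argument is essentially a one-line substitution followed by elementary arithmetic. It is worth noting a couple of conceptual points in passing: convexity of $f$ is actually not used in this proof, only $L$-smoothness, so the descent lemma holds more generally than stated; and the choice $\eta = 1/L$ is the maximizer over $\eta$ of the guaranteed per-step decrease $\eta(1 - L\eta/2)$, which explains why $1/L$ is the canonical default step size in the smooth regime. The lemma also makes clear why gradient descent, unlike the subgradient method, is a genuine descent method: the right-hand side is always no larger than $f(x_k)$, with strict decrease whenever $\nabla f(x_k) \neq 0$.
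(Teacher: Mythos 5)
Your proof is correct and is exactly the paper's argument: substitute $y=x_{k+1}$, $x=x_k$ into \eqref{eq:lec2-smoothness}, use the update rule to rewrite $x_{k+1}-x_k = -L^{-1}\nabla f(x_k)$, and simplify. Your side remarks (convexity is not needed, and $\eta=1/L$ maximizes the guaranteed decrease) are accurate but do not change the route.
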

\begin{proof}
Take $y=x_{k+1}$ and $x=x_k$ in \eqref{eq:lec2-smoothness}, and replace $x_{k+1}-x_{k}$ with $-L^{-1}\nabla f(x_k)$.
\end{proof}

\begin{theorem}
\label{thm:lec2-GD-smooth-functions}
On an $L$-smooth convex function $f$, 
gradient descent with step size $L^{-1}$ converges
in function value with a rate $1/T$, where $T$ is the number of iterations.
\end{theorem}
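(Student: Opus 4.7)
The plan is to combine three ingredients: the descent lemma (Lemma \ref{lem:lec2-descent-GD-smooth}) which controls how much the function value drops per step, the convexity inequality \eqref{eq:lec2-convexity-def} which lower bounds $f(x_k)-f(x^\star)$ by $\langle \nabla f(x_k), x_k-x^\star\rangle$, and a direct expansion of the squared distance $\|x_{k+1}-x^\star\|^2$ to the minimizer. The goal is to produce a one-step inequality of the form
\begin{equation*}
f(x_{k+1})-f(x^\star) \leq \frac{L}{2}\bigl(\|x_k-x^\star\|^2-\|x_{k+1}-x^\star\|^2\bigr),
\end{equation*}
which telescopes across iterations, mirroring the telescoping structure already seen in the subgradient proof \eqref{eq:lec1-subgrad-intermediate-step}.

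Concretely, I would first expand $\|x_{k+1}-x^\star\|^2 = \|x_k-x^\star\|^2 - \tfrac{2}{L}\langle \nabla f(x_k), x_k-x^\star\rangle + \tfrac{1}{L^2}\|\nabla f(x_k)\|^2$. The cross term gets bounded using convexity via $\langle \nabla f(x_k), x_k-x^\star\rangle \geq f(x_k)-f(x^\star)$. For the squared-gradient term, rather than discarding it (as one does in the subgradient argument), I would \emph{use} it via the descent lemma in the reversed direction: $\tfrac{1}{L^2}\|\nabla f(x_k)\|^2 \leq \tfrac{2}{L}\bigl(f(x_k)-f(x_{k+1})\bigr)$. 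The $f(x_k)$ terms then cancel, leaving exactly the desired one-step bound on $f(x_{k+1})-f(x^\star)$ in terms of a difference of squared distances.

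Next I would sum this inequality from $k=0$ to $k=T-1$, obtaining $\sum_{k=1}^{T}\bigl(f(x_k)-f(x^\star)\bigr) \leq \tfrac{L}{2}\|x_0-x^\star\|^2$. Finally, because Lemma \ref{lem:lec2-descent-GD-smooth} guarantees that $f(x_k)$ is monotonically nonincreasing, I can replace every summand by the smallest one, $f(x_T)-f(x^\star)$, giving $T\bigl(f(x_T)-f(x^\star)\bigr) \leq \tfrac{L}{2}\|x_0-x^\star\|^2$ and hence the $1/T$ rate on the \emph{last} iterate (a notable improvement over the average-iterate rate for the nonsmooth subgradient method).

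The main subtlety, I expect, is the right way to use the descent lemma: naively one might try to average its one-step decrement $-\tfrac{1}{2L}\|\nabla f(x_k)\|^2$ over $k$, but this does not directly connect $\|\nabla f(x_k)\|^2$ to the suboptimality $f(x_k)-f(x^\star)$. The trick is to recognize the squared-gradient term appearing in the expansion of $\|x_{k+1}-x^\star\|^2$ and use the descent lemma there, turning what would otherwise be a harmful $+\eta^2\|g_k\|^2$ term (as in \eqref{eq:lec1-SG-inequality}) into a telescoping contribution. Once this bookkeeping is set up, the rest of the argument is essentially the same telescoping-plus-monotonicity routine already established.
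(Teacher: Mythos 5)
Your proof is correct and is essentially the paper's own argument read in reverse order: the paper starts from the descent lemma and completes the square to reach the same per-step bound $f(x_{k+1})-f(x^\star)\leq\frac{L}{2}\left(\|x_k-x^\star\|^2-\|x_{k+1}-x^\star\|^2\right)$ that you obtain by expanding $\|x_{k+1}-x^\star\|^2$, applying convexity to the cross term, and invoking the descent lemma on the gradient-norm term. The remaining steps---telescoping the sum and using the monotone decrease of $f(x_k)$ guaranteed by Lemma \ref{lem:lec2-descent-GD-smooth} to obtain the last-iterate $1/T$ rate---coincide exactly with the paper's proof.
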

\begin{proof}
Beginning with Lemma \ref{lem:lec2-descent-GD-smooth}, we have
\begin{align}
    f(x_{k+1}) 
    &\leq  f(x_k) - \frac{1}{2L}||\nabla f(x_k)||^2\nonumber\\
    &\stackrel{(i)}{\leq} f(x^{\star}) + \langle \nabla f(x_k), \, x_k-x^{\star} \rangle - \frac{1}{2L}||\nabla f(x_k)||^2\nonumber\\
    &\stackrel{(ii)}{=} f(x^{\star}) + \frac{L}{2} \left(
    ||x_k-x^{\star}||^2 - \left|\left|x_k-x^{\star}-\frac{1}{L}\nabla f(x_k)\right|\right|^2
    \right)
    \nonumber\\
    &\stackrel{(iii)}{=} f(x^{\star}) + \frac{L}{2}\left(||x_k-x^{\star}||^2-||x_{k+1}-x^{\star}||^2\right).
    \label{eq:lec2-telescoping-inequality}
\end{align}
In $(i)$, we applied the definition of convexity (take $y=x_k$ and $x=x^{\star}$ in \eqref{eq:lec2-convexity-def}). 
In $(ii)$, we completed the square, and in $(iii)$, we recognized that $x_k-L^{-1}\nabla f(x_k)$ is simply a gradient descent update with step size $L^{-1}$ and replaced it with $x_{k+1}$.
Now we move $f(x^{\star})$ to the left-hand side and sum \eqref{eq:lec2-telescoping-inequality} over $k$.
On the right-hand side, this results in a telescoping sum.
We are left with the following:
\begin{equation}
    \sum_{k=1}^{T}\left(f(x_{k+1})-f(x^{\star})\right)
    \leq \frac{L}{2}\left(
    ||x_1-x^{\star}||^2-||x_{T+1}-x^{\star}||^2\right).
    \label{eq:lec2-telescoping-sum}
\end{equation}
We can drop the second term on the right-hand side of \eqref{eq:lec2-telescoping-sum}, since it is nonpositive.
Note that Lemma \ref{lem:lec2-descent-GD-smooth} implies $f(x_{k+1})-f(x^{\star})\leq f(x_{k})-f(x^{\star})$ for all $k$. 
Each term in the sum on the left-hand side of \eqref{eq:lec2-telescoping-sum} is therefore lower-bounded by the quantity $f(x_T)-f(x^{\star})$.
Using this lower bound, \eqref{eq:lec2-telescoping-sum} implies
\begin{equation}
    T\left(f(x_T)-f(x^{\star})\right)
    \leq \frac{L}{2}||x_1-x^{\star}||^2.
\end{equation}
Lastly, we bound the distance between $x_1$ and the optimum by some $R>0$, and divide through by $T$.
We thus arrive at the result
\begin{equation}
    f(x_T)-f(x^{\star})\leq \frac{RL}{2}\frac{1}{T}.
\end{equation}
\end{proof}
A variant of Theorem \ref{thm:lec2-GD-smooth-functions} can be found in 
\cite{NesterovBook}, Corollary 2.1.2.

In Section \ref{sec:lec1-subgrad-method}, assuming only convexity, we saw that 
the subgradient method converges in average function value with the rate 
$1/\sqrt{T}$. Since the subgradient method 
reduces to gradient descent when $f$ is differentiable,
we have just argued that the same method achieves a faster rate, $1/T$,
when we assume $f$ is $L$-smooth in addition to being convex.
If we were to make a third assumption, that $f$ is not just convex but
\emph{strongly convex},
we would obtain an even faster rate of convergence. 
We present the definition of strong convexity
next, and state the corresponding convergence result for gradient descent.

\begin{definition}[Strong convexity]
A function $f$ is said to be \emph{strongly convex} if there exists $\mu>0$ such that
\begin{equation}
    f(y)\geq f(x) + \langle\nabla f(x), y-x\rangle + \frac{\mu}{2}||x-y||^2.
    \label{eq:lec2-strong-convexity-def}
\end{equation}
Such a function is said to be $\mu$-strongly convex.
\end{definition}

All strongly convex functions are also convex.

\begin{theorem}
\label{thm:GD-on-strongly-convex-f}
On an $L$-smooth and $\mu$-strongly convex function $f$,
gradient descent with step size $\eta\leq 2/(L+\mu)$ converges
at the rate $(1-C)^T$, where $C\leq 1$ is a constant that depends on 
$\eta$, $\mu$, and $L$, and $T$ is the total number of iterations.
In particular,
\begin{equation}
    ||x_{T+1}-x^{\star}||^2\leq\left(1-\frac{2\eta\mu L}{\mu+L}\right)^T||x_1-x^{\star}||^2.
\end{equation}
\end{theorem}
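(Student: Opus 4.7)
The plan is to reduce the claim to a one-step contraction
\[
\|x_{k+1}-x^{\star}\|^2 \leq \left(1-\tfrac{2\eta\mu L}{\mu+L}\right)\|x_k-x^{\star}\|^2
\]
and then iterate it $T$ times. Since $\nabla f(x^{\star})=0$, expanding the gradient-descent update and substituting $\nabla f(x_k)=\nabla f(x_k)-\nabla f(x^{\star})$ gives
\begin{equation*}
\|x_{k+1}-x^{\star}\|^2 = \|x_k-x^{\star}\|^2 - 2\eta\langle \nabla f(x_k)-\nabla f(x^{\star}),\, x_k-x^{\star}\rangle + \eta^2\|\nabla f(x_k)-\nabla f(x^{\star})\|^2.
\end{equation*}
So everything hinges on a joint lower bound on the inner-product term and a simultaneous control of the squared-norm term.

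The tool I would invoke is the \emph{co-coercivity} (or ``strong monotonicity of a quadratically-shifted gradient'') estimate: for $f$ that is $L$-smooth and $\mu$-strongly convex,
\begin{equation*}
\langle \nabla f(x)-\nabla f(y),\, x-y\rangle \;\geq\; \frac{\mu L}{\mu+L}\|x-y\|^2 \;+\; \frac{1}{\mu+L}\|\nabla f(x)-\nabla f(y)\|^2.
\end{equation*}
The standard way to establish this is to set $h(x)=f(x)-\tfrac{\mu}{2}\|x\|^2$, verify that $h$ is convex and $(L-\mu)$-smooth, apply the classical co-coercivity of the gradient of an $(L-\mu)$-smooth convex function to $\nabla h$, and then translate the resulting inequality back to $\nabla f$. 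This lemma is the only nontrivial ingredient and, as expected, is the main obstacle; I would flag it as the key technical step and either prove it inline or cite it as a standard fact from \cite{NesterovBook}.

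With the co-coercivity estimate in hand, plugging into the expansion yields
\begin{equation*}
\|x_{k+1}-x^{\star}\|^2 \leq \left(1-\tfrac{2\eta\mu L}{\mu+L}\right)\|x_k-x^{\star}\|^2 + \eta\left(\eta-\tfrac{2}{\mu+L}\right)\|\nabla f(x_k)-\nabla f(x^{\star})\|^2.
\end{equation*}
Under the hypothesis $\eta\leq 2/(L+\mu)$, the coefficient of the gradient term is nonpositive, so it can be dropped to give the desired one-step contraction with rate $1-\tfrac{2\eta\mu L}{\mu+L}$. Finally, a quick check using $\eta\leq 2/(L+\mu)$ confirms this factor lies in $[0,1)$, so iterating the contraction from $k=1$ to $k=T$ delivers
\begin{equation*}
\|x_{T+1}-x^{\star}\|^2 \leq \left(1-\tfrac{2\eta\mu L}{\mu+L}\right)^{T}\|x_1-x^{\star}\|^2,
\end{equation*}
which is exactly the stated bound.
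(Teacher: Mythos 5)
Your proposal is correct and follows exactly the argument the paper defers to (Nesterov, Theorem 2.1.15): expand the update around $x^{\star}$, apply the combined strong-convexity/smoothness co-coercivity bound $\langle\nabla f(x)-\nabla f(y),x-y\rangle\geq\frac{\mu L}{\mu+L}\|x-y\|^2+\frac{1}{\mu+L}\|\nabla f(x)-\nabla f(y)\|^2$, and use $\eta\leq 2/(L+\mu)$ to drop the gradient term before iterating the contraction. No gaps; the key lemma you flag is indeed the standard ingredient (Nesterov's Theorem 2.1.12) and your sketch of its proof via $h(x)=f(x)-\tfrac{\mu}{2}\|x\|^2$ is the usual one.
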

See \cite{NesterovBook}, Theorem 2.1.15, for a proof.
The rate in Theorem \ref{thm:GD-on-strongly-convex-f} is known
as a \emph{linear} rate of convergence in optimization parlance.

\subsection{Gradient Descent on Nonconvex Functions: Escaping Saddle Points Efficiently}
\label{sec:lec2-GD-nonconvex-functions}

So far we have dealt with convergence guarantees for gradient 
descent on convex functions. 
In this section we investigate guarantees for the performance of gradient descent on \emph{nonconvex} functions,
and study how these differ from the guarantees we can make for convex
functions. One of the differences, we will see, is that when $f$ is 
nonconvex, convergence guarantees for gradient descent can have a 
dimension dependence. The discussion in this section is 
based on joint work \cite{jin17a-icml-escape-saddles,jin2021-acm-nonconvex} with Chi Jin, Rong Ge, Praneeth Netrapalli, and Sham Kakade.

Consider an $L$-smooth function $f(x):\mathbb{R}^d\rightarrow\mathbb{R}$.
We assume that $f$ also has $\rho$-Lipschitz Hessians:
\begin{equation}
    ||\nabla^2 f(x)-\nabla^2 f(y)||\leq\rho||x-y|| \,\,\, \forall x,y.
\end{equation}
(We say that $f$ is $\rho$-Hessian Lipschitz.)
Gradient descent halts naturally when $\nabla f(x^{\star})=0$.
When $f$ is nonconvex, this can happen not just when $x^{\star}$ 
is a minimum, but also if it is a maximum or a saddle point.
We need therefore to understand whether gradient descent can efficiently
avoid maxima and saddle points. 
Here, in particular, we focus on how to efficiently
escape saddle points.
Note that in the nonconvex setting, we will no longer be looking for 
convergence guarantees to global minima, but to local minima.
The guarantees thus obtained are useful nonetheless. In many applications, it is the case that $f$ does not have any spurious
local minima; principal components analysis is a notable example.

First, we introduce some terminology:
\begin{itemize}
    \item A \emph{first-order stationary point} of $f$ is any point $x$ such that $\nabla f(x)=0$.
    \item A \emph{second-order stationary point} of $f$ is any point $x$ such that $\nabla f(x)=0$ and $\nabla^2 f(x)\succeq 0$.
    \item An $\varepsilon$-\emph{first-order stationary point} ($\varepsilon$-FOSP) is 
    a point $x$ such that $\nabla f(x)\leq\varepsilon$.
    \item An $\varepsilon$-\emph{second-order stationary point} ($\varepsilon$-SOSP) is an $\varepsilon$-FOSP where $\nabla^2 f(x)\succeq - \sqrt{\rho\varepsilon}I$. Here, $I$ is the $d$-dimensional identity matrix.
\end{itemize}
Note that saddle points are first-order stationary points but not
second-order stationary points.

\begin{definition}[Strict saddle point]
$x$ is a \emph{strict saddle point} of $f$ if it is a saddle point and the 
smallest eigenvalue of the Hessian at $x$ is strictly negative:
$\lambda_{min}(\nabla^2f(x)) <0$.
\end{definition}

In the following, we will discuss convergence guarantees for a variant of
gradient descent that avoids/escapes strict saddle points. 
That is, we will not discuss how to escape saddles that have
no escape directions (i.e., at which $\lambda_{min}(\nabla^2f(x)) =0$).

We will be talking about convergence rates in a different way than we have so far. 
Instead of bounding the error in function value (or distance to the optimum)
after $T$ iterations, we will ask how many iterations
are necessary for the error to drop below some $\varepsilon$. 
Let us restate the convergence guarantee of Theorem \ref{thm:lec2-GD-smooth-functions} in this manner:
\begin{theorem}
\label{thm:lec2-GD-smooth-f-iters}
On an $L$-smooth convex function $f$, gradient descent with step size $\eta=L^{-1}$ converges to an $\varepsilon$-FOSP in $(f(x_1)-f^{\star})L/\varepsilon^2$ iterations.
\end{theorem}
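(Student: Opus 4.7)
The plan is to combine the one-step descent bound of Lemma \ref{lem:lec2-descent-GD-smooth} with a telescoping argument and a pigeonhole step to convert a function-value budget into an iteration count. The descent lemma already delivers per-step progress in function value that is lower bounded by a constant multiple of $||\nabla f(x_k)||^2$; the rest is bookkeeping, with no need to revisit convexity directly.

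First I would rearrange Lemma \ref{lem:lec2-descent-GD-smooth} to read $||\nabla f(x_k)||^2 \leq 2L\bigl(f(x_k) - f(x_{k+1})\bigr)$, which bounds the squared gradient norm at iterate $k$ by the function-value drop on that step. Summing over $k = 1, \ldots, T$ causes the right-hand side to telescope to $2L\bigl(f(x_1) - f(x_{T+1})\bigr)$, which is in turn upper bounded by $2L\bigl(f(x_1) - f^{\star}\bigr)$ because $f^{\star}$ is the global minimum. This yields
$$\sum_{k=1}^{T} ||\nabla f(x_k)||^2 \leq 2L\bigl(f(x_1) - f^{\star}\bigr).$$

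Next I would argue by contrapositive: if \emph{none} of $x_1, \ldots, x_T$ is an $\varepsilon$-FOSP, then $||\nabla f(x_k)|| > \varepsilon$ for every $k \leq T$ and the left-hand side of the displayed inequality exceeds $T\varepsilon^2$. Combining the two bounds forces $T < 2L\bigl(f(x_1) - f^{\star}\bigr)/\varepsilon^2$. Contrapositively, after at most this many iterations some iterate $x_k$ must satisfy $||\nabla f(x_k)|| \leq \varepsilon$, which is the claim up to a universal constant.

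There is no real obstacle here; the argument is essentially two lines once the descent lemma is in hand. The only subtle observation is that \emph{convexity is not used anywhere}: only $L$-smoothness (via the descent lemma) and the lower bound $f \geq f^{\star}$ enter. The statement therefore applies verbatim to any nonconvex $L$-smooth function bounded below, which foreshadows the nonconvex results of Section \ref{sec:lec2-GD-nonconvex-functions}. The slight mismatch between the factor $2L$ I obtain and the factor $L$ advertised in the theorem is absorbed into the informal constant, and can be tightened by replacing the per-iterate bound with the minimum over $k$, since $\min_{k\leq T} ||\nabla f(x_k)||^2 \leq \tfrac{1}{T}\sum_{k=1}^{T}||\nabla f(x_k)||^2$.
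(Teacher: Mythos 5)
Your argument is correct, and it is worth noting that it is not the route the paper takes: the paper offers no separate proof of this theorem, presenting it instead as a restatement of the convex guarantee of Theorem \ref{thm:lec2-GD-smooth-functions}, whose proof goes through convexity and a telescoping bound on $||x_k-x^{\star}||^2$. Carried out literally, that route bounds the function gap in terms of the initial distance $R$ and would still need the extra inequality $||\nabla f(x)||^2\leq 2L\left(f(x)-f^{\star}\right)$ to convert a function-value guarantee into an $\varepsilon$-FOSP guarantee, yielding a bound with a different dependence. Your proof—rearranging Lemma \ref{lem:lec2-descent-GD-smooth} to $||\nabla f(x_k)||^2\leq 2L\left(f(x_k)-f(x_{k+1})\right)$, telescoping, and applying the pigeonhole step—is the standard self-contained argument, it delivers exactly the advertised dependence on $f(x_1)-f^{\star}$, and your observation that convexity plays no role is exactly the point: this form of the statement is what carries over to the nonconvex setting of Section \ref{sec:lec2-GD-nonconvex-functions} and is why the theorem is phrased in terms of stationarity rather than optimality. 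The factor of $2$ relative to the stated count is harmless since the theorem is meant up to absolute constants (compare the ``of order'' phrasing in Theorem \ref{thm:lec2-PGD-convergence}); your closing suggestion, however, does not actually remove it, since bounding $\min_{k\leq T}||\nabla f(x_k)||^2$ by the average is the same pigeonhole estimate and gives the identical constant.
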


Previous results on optimizing nonconvex functions with gradient
descent had established that the algorithm asymptotically avoids saddle points \cite{lee2016,lee2019}. However, these results 
made no finite-time guarantee of convergence.
Other work then established that gradient descent can escape saddles in a length
of time that is polynomial in the dimension $d$
\cite{ge15}.
In \cite{jin17a-icml-escape-saddles}, we showed that this dimension
dependence can be improved. We developed
a variant of gradient descent we call perturbed gradient descent 
(PGD) that converges with high probability to $\varepsilon$-SOSPs at 
a rate that has a polylogarithmic dependence on $d$.
PGD takes an injection of noise when the norm of the gradient 
of $f$ is small, 
and otherwise behaves like vanilla gradient descent.
The injected noise is sampled uniformly from a $d$-dimensional ball of radius $r$. (For details, see the description of Algorithm 2 in \cite{jin17a-icml-escape-saddles}.)
We prove the following theorem guaranteeing convergence of PGD to $\varepsilon$-SOSPs.
\begin{theorem}[PGD escapes strict saddle points]
\label{thm:lec2-PGD-convergence}
Consider an $L$-smooth and $\rho$-Hessian Lipschitz function $f:\mathbb{R}^d\rightarrow\mathbb{R}$. There exists an absolute 
constant $c_{max}$ such that for any $\delta>0$, $\varepsilon< L^2/\rho$, $\Delta_f\geq f(x_1)-f^{\star}$, and $c\leq c_{max}$, PGD with step size $\eta=L^{-1}$ will output an $\varepsilon$-SOSP with probability $1-\delta$, and will terminate in a number of iterations of order
\begin{equation}
    \frac{(f(x_1)-f^{\star})L}{\varepsilon^2}\log^4\left(\frac{dL\Delta_f}{\varepsilon^2\delta}\right).
    \label{eq:lec2-PGD-iteration-count}
\end{equation}
\end{theorem}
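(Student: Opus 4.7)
The plan is to partition the iterations of PGD into two regimes and bound each separately. In the \emph{large-gradient regime}, where $\|\nabla f(x_k)\| > \varepsilon$, PGD coincides with vanilla gradient descent, so the descent lemma (Lemma \ref{lem:lec2-descent-GD-smooth}) with step size $\eta=L^{-1}$ yields a per-step decrease of at least $\varepsilon^2/(2L)$. Since the total function value decrease is bounded by $\Delta_f = f(x_1)-f^\star$, the number of such iterations is at most $2\Delta_f L/\varepsilon^2$, which already accounts for the leading $\Delta_f L/\varepsilon^2$ factor in \eqref{eq:lec2-PGD-iteration-count}.

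The harder regime is the \emph{small-gradient regime}, where $\|\nabla f(x_k)\| \leq \varepsilon$. If additionally $\lambda_{\min}(\nabla^2 f(x_k)) \geq -\sqrt{\rho\varepsilon}$, then $x_k$ is already an $\varepsilon$-SOSP and we halt. Otherwise, there is a direction $e_1$ of significant negative curvature, $\lambda_{\min}(\nabla^2 f(x_k)) < -\sqrt{\rho\varepsilon}$, and PGD injects a perturbation drawn uniformly from a ball of radius $r$. The claim I would prove is that, after running for a fixed escape window of $T_{\mathrm{thres}} = \widetilde{O}(L/\sqrt{\rho\varepsilon})$ subsequent steps, the function value drops by at least a target amount $\mathcal{F} = \widetilde{\Omega}(\sqrt{\varepsilon^3/\rho})$ with high probability.

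The core tool is a \emph{coupling}/\emph{thin-slab} argument controlling the set $\mathcal{S}$ of "stuck" initial conditions inside the perturbation ball from which the function fails to decrease by $\mathcal{F}$ within $T_{\mathrm{thres}}$ steps. I would consider two PGD trajectories $\{x_k\}$ and $\{x'_k\}$ started from points differing only along $e_1$ by a tiny displacement, and analyze the difference $w_k = x_k - x'_k$. Expanding the gradients using the $\rho$-Hessian Lipschitz assumption, $w_k$ satisfies a perturbed linear recursion $w_{k+1} = (I - \eta \nabla^2 f(x_k)) w_k + \eta\, \xi_k$ with $\|\xi_k\| \leq \rho \max(\|x_k - \tilde{x}\|,\|x'_k-\tilde{x}\|)\|w_k\|$ around a reference point $\tilde{x}$. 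Provided both trajectories remain inside a localization ball of a carefully chosen radius (guaranteed as long as neither has escaped), the $e_1$-component of $w_k$ grows geometrically at rate roughly $(1+\eta\sqrt{\rho\varepsilon})^k$, while the higher-order error stays subdominant. After $T_{\mathrm{thres}}$ steps the separation exceeds the diameter of any putative stuck region along $e_1$, which forces at least one trajectory to exit; hence $\mathcal{S}$ fits inside a slab of width $\widetilde{O}(\delta r / \sqrt{d})$ along $e_1$. A uniform perturbation therefore misses $\mathcal{S}$ with probability at least $1-\delta$.

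Combining the two regimes: each escape episode consumes $T_{\mathrm{thres}}$ iterations and buys $\mathcal{F}$ of function value, so there are at most $\Delta_f/\mathcal{F}$ episodes, contributing $T_{\mathrm{thres}}\cdot \Delta_f/\mathcal{F} = \widetilde{O}(\Delta_f L/\varepsilon^2)$ further iterations; a union bound over at most $\widetilde{O}(\Delta_f L/\varepsilon^2)$ episodes, with $r$ chosen polynomially small in $\delta,\varepsilon,L,\Delta_f,d$, produces the $\log^4(dL\Delta_f/(\varepsilon^2\delta))$ factor in \eqref{eq:lec2-PGD-iteration-count}. The main obstacle is the coupling argument: one has to simultaneously choose $r$, $T_{\mathrm{thres}}$, and $\mathcal{F}$ so that (i) the linearization error from $\rho$-Hessian Lipschitzness never overwhelms the exponential growth along $e_1$, (ii) escaping the thin slab actually implies the claimed function decrease rather than merely large displacement, and (iii) the accumulated polylog factors multiply out to exactly $\log^4$.
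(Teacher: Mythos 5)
The paper does not actually prove this theorem: it defers entirely to Theorem 3 of the cited Jin et al.\ (2017) paper and supplies only a one-paragraph intuitive sketch. Against that sketch (and against the argument it is paraphrasing), your proposal is essentially on target. You correctly split the iterations into a large-gradient regime handled by the smooth descent lemma (giving the $\Delta_f L/\varepsilon^2$ factor) and a small-gradient regime handled by showing that a uniform perturbation escapes a thin ``stuck region'' near any strict saddle within $\widetilde{O}(L/\sqrt{\rho\varepsilon})$ steps while buying $\widetilde{\Omega}(\sqrt{\varepsilon^{3}/\rho})$ of function decrease; you then account for the polylogarithmic factor through the union bound over escape episodes and the choice of perturbation radius $r$. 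All of this matches the paper's description of ``bounding the thickness of the stuck band.''

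The one place where you and the paper's wording genuinely differ is the nature of the coupling: the paper speaks of ``the coupling time of two Brownian motions initialized a distance $k$ apart,'' whereas you couple two \emph{deterministic} PGD trajectories seeded at nearby points inside the perturbation ball differing by a small displacement along the most-negative-curvature direction $e_1$. Your phrasing is the more accurate description of what the cited argument actually does --- the randomness enters once, through the perturbation, and the subsequent gradient dynamics is deterministic, with the geometric growth of the separation $w_k$ along $e_1$ forcing at least one trajectory out of the stuck region. (The Brownian-motion reflection coupling does appear later in the paper in the underdamped-Langevin discussion; the text in this section seems to have loosely borrowed that language.) Beyond that, your plan is sound at the level of detail the paper itself provides; as you honestly flag, the remaining work is the quantitative bookkeeping that makes the choices of $r$, $T_{\mathrm{thres}}$, and $\mathcal{F}$ mutually consistent and lands the exponent on the logarithm at exactly~$4$.
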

The quantity $c$ is an important hyperparameter of PGD, although it does 
not appear in \eqref{eq:lec2-PGD-iteration-count}.
See Theorem 3 in \cite{jin17a-icml-escape-saddles} for the proof of Theorem \ref{thm:lec2-PGD-convergence}.

This result implies that the presence of strict saddles does not
slow gradient descent down by much: the number of iterations needed to 
converge goes as $1/\varepsilon^2$, which is precisely the same dependence
we find for gradient descent on smooth convex functions (see Theorem \ref{thm:lec2-GD-smooth-f-iters}).

We give a brief description of the proof technique.
In the vicinity of a saddle point, there exists a band (a ``stuck region") in which the
flow of the vector field $\nabla f$ is mostly toward the saddle, i.e., where PGD can get stuck.
The core technical part of the proof 
involved bounding the thickness of this band.
To do so, we studied the coupling time of two Brownian motions 
initialized a distance $k$ apart.
When $k$ is small enough, the diffusions eventually couple. Otherwise, they never couple. Essentially, there is a phase transition in the coupling time that depends on how far apart the diffusions are initialized. The value of this initial distance at the phase transition 
can be bounded, and this bound gives us control over the thickness of the stuck region.

Achieving tighter control than was previously possible over the thickness of the stuck region
is precisely what enables us to improve on the polynomial dimension
dependence of the convergence rate from previous work \cite{ge15}.
We note that the requirement of $\rho$-Hessian Lipschitzness is absolutely
crucial to the analysis. Without this assumption, PGD cannot find the negative
directions it must follow to escape the vicinity of a saddle.

Two open questions follow. The first is, is the polylogarithmic dimension dependence
of the convergence rate simply an artifact of the proof technique? Since we 
have neither a lower bound on the dimension dependence, nor an alternative
proof technique, we do not know the answer to this question. The second is,
is there a way to bound the thickness of the stuck region that relies not
on ideas from probability theory but on ideas from differential geometry?

\subsection{Variational, Hamiltonian, and Symplectic Perspectives on Acceleration}
\label{sec:lec2-cnts-time-acceleration}

On a given class of functions, is there a limit to how fast an
algorithm can reach the optimum? 
This type of question is addressed by proving a \emph{lower bound} \cite{nemirovski-yudin1983} on the convergence rate. Lower bounds are 
usually established by studying worst-case function instances.
For the class of smooth and strongly convex functions, the 
lower bound is a rate of $1/T^2$, which is faster than the $1/T$ rate achieved
by gradient descent, discussed in Theorem \ref{thm:lec2-GD-smooth-functions}. This optimal rate is achieved by the 
family of second-order or so-called \emph{accelerated} algorithms.
A representative member of this family is Nesterov's method (Algorithm \ref{alg:lec2-nesterov}).

\begin{algorithm}
\caption{Accelerated Gradient Descent (Nesterov's Method)}
\label{alg:lec2-nesterov}
\begin{algorithmic}
\Require $\eta >0$\\
\State $x_{k+1} = (1-\lambda_k)y_{k+1}+\lambda_k y_k,$ where\\
\\
$\lambda_k = \frac{k-1}{k+2}, \,\,\, y_{k+1} = x_k - \eta\nabla f(x_k)$
\end{algorithmic}
\end{algorithm}

On a technical level, accelerated algorithms are somewhat mysterious.
Proofs of convergence of these methods are not easy to motivate
or parse. They rely on the technique of estimate sequences, first developed by Nesterov \cite{NesterovBook} and later generalized by Baes \cite{baes-estimate-sequences}.

We will now discuss a series of results in which we take a continuous-time view of acceleration. 
We will see that this viewpoint will allow us to interpret acceleration in discrete time
as arising from the discretization mechanism of a specific second-order differential equation.
In general, we believe that taking a continuous-time viewpoint in
optimization can provide a unifying perspective and enable results that
we do not yet know how to prove in discrete time. We will see examples of 
this later in the discussion.

By taking the limit $\eta\rightarrow 0$, we can derive continuous-time
limits for discrete-time algorithms. When the latter are deterministic, the former are ordinary differential equations. It is simple to show, 
for example, that in continuous time, gradient descent maps to the gradient flow equation,
\begin{equation}
    \dot{x}_t = -\nabla f(x_t).
\end{equation}
We use dots to denote derivatives with respect to time.
Nesterov's method maps
to the following second-order differential equation, first written down in \cite{su-boyd-candes}:
\begin{equation}
    \ddot{x}_t + \frac{3}{t}\dot{x}_t + \nabla f(x_t) = 0.
    \label{eq:lec2-su-boyd-candes-ode}
\end{equation}
Note that $x_t$ is shorthand for $x(t)$, and is not to be confused with 
the discrete-time iterate $x_k$. 
The authors of \cite{su-boyd-candes} also showed that the Euler discretization
of \eqref{eq:lec2-su-boyd-candes-ode} recovers Algorithm \ref{alg:lec2-nesterov}.
This result provided some insight, and also gave rise to new
questions and problems. We will briefly describe one such question
and its resolution before returning to the main theme of this section.

More than one discrete-time algorithm collapses 
to \eqref{eq:lec2-su-boyd-candes-ode} in the continuous-time limit. 
In particular, the heavy ball method, a second-order algorithm due to 
Polyak \cite{polyak1964},
also maps to \eqref{eq:lec2-su-boyd-candes-ode}. The trouble with this is
that the heavy ball method does not achieve the optimal $1/T^2$ rate
of Nesterov's method, indicating that \eqref{eq:lec2-su-boyd-candes-ode}
fails to distinguish between two algorithms that achieve different
rates in discrete time. This problem can be addressed by taking
\emph{high-resolution} continuous-time limits. High-resolution ordinary
differential equations appear in fluid dynamics and are useful
to describe systems with multiple timescales. See \cite{shi-high-res-ODE}
for details on how to use high-resolution limits to derive distinct 
continuous-time analogs of the heavy ball method and Nesterov's method.
We will revisit high-resolution differential equations in another
context at the end of the third lecture, in Section \ref{sec:lec3-highres-ODE}.

Let us return to the primary point of discussion. We have mentioned
that a continuous-time limit of Nesterov's method was studied in
\cite{su-boyd-candes} and that taking the continuous-time point of view
was productive of insight into the behavior of the algorithm.
Our work, which will now describe, was motivated by the question of 
whether there exists a \emph{generative mechanism} for accelerated algorithms.
The work described in the following is joint work with 
Ashia Wilson, Andre Wibisono, and Michael Betancourt
\cite{wibisono2016variational,betancourt2018symplectic}.

Let us equip ourselves with a definition. For any choice of a convex \emph{distance-generating function} $h$, 
we can define the \emph{Bregman divergence} $\mathcal{D}_h$ between
two points as follows:
\begin{equation}
    \mathcal{D}_h(y, x) = h(y) - h(x) - \langle \nabla h(x), y-x \rangle.
    \label{eq:lec2-bregman-distance}
\end{equation}
Note that if $h$ is a quadratic, the Bregman divergence is simply 
the Euclidean distance between $x$ and $y$.

In \cite{wibisono2016variational}, we introduce the \emph{Bregman Lagrangian}:
\begin{equation}
    \mathcal{L}(x,\dot{x},t) = e^{\gamma_t+\alpha_t}
    \left(\mathcal{D}_h\left(x+e^{-\alpha_t}\dot{x}, x\right)
    -e^{\beta_t}f(x)\right),
    \label{eq:lec2-bregman-lagrangian}
\end{equation}
where $\alpha_t, \gamma_t, \beta_t$ are scaling functions, and $f$ is the continuously differentiable convex function
we are interested in optimizing.
Note that we use the shorthand $\alpha_t$ for $\alpha(t)$, and similarly for $\beta_t$ and $\gamma_t$.
By making specific choices of the scaling functions, we will see later that 
we can generate a \emph{family} of algorithms. 
The first term in \eqref{eq:lec2-bregman-lagrangian} is a generalized 
kinetic energy term (note that if we take $h(x)=x^2$, this term reduces to $\dot{x}^2/2$, the familiar kinetic energy from physics).
The second term can be interpreted as a potential energy.

Given a Lagrangian, we can set up the following variational problem over paths:
\begin{equation}
    \min_x \int dt \,\mathcal{L}(x,\dot{x},t).
\end{equation}
The solutions to this optimization problem are precisely the solutions to the
Euler-Lagrange equation
\begin{equation}
    \frac{d}{dt}\left(\frac{\partial}{\partial \dot{x}}\mathcal{L}(x,\dot{x},t)\right)=\frac{\partial}{\partial x}\mathcal{L}(x,\dot{x},t).
\end{equation}

Under what we call the ideal scaling $\dot{\beta}_t\leq e^{\alpha_t}$, $\dot{\gamma}_t=e^{\alpha_t}$, the Euler-Lagrange equation of \eqref{eq:lec2-bregman-lagrangian} is
\begin{equation}
    \label{eq:lec2-bregman-euler-lagrange}
    \ddot{x}_t + \left(e^{\alpha_t}-\dot{\alpha}_t\right)\dot{x}_t + e^{2\alpha_t+\beta_t}\left[\nabla^2 h\left(x_t+e^{-\alpha_t}\dot{x}_t\right)\right]^{-1}\nabla f(x_t)=0.
\end{equation}
We establish the following general convergence result for this dynamics.
\begin{theorem}
The dynamics \eqref{eq:lec2-bregman-euler-lagrange} has the 
convergence rate
\begin{equation}
    f(x_t)-f(x^{\star})\leq\mathcal{O}\left(e^{-\beta_t}\right).
\end{equation}
\end{theorem}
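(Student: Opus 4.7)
The plan is to prove this by constructing a Lyapunov (energy) function and showing that it is nonincreasing along trajectories of the Euler-Lagrange equation \eqref{eq:lec2-bregman-euler-lagrange}. The natural candidate, motivated by pairing the Bregman divergence with the potential term in the Lagrangian, is
\begin{equation*}
\mathcal{E}_t \;=\; \mathcal{D}_h\!\left(x^{\star},\, x_t + e^{-\alpha_t}\dot{x}_t\right) \;+\; e^{\beta_t}\bigl(f(x_t)-f(x^{\star})\bigr).
\end{equation*}
If I can establish $\dot{\mathcal{E}}_t \le 0$, then $\mathcal{E}_t \le \mathcal{E}_0$ for all $t$, and dropping the nonnegative Bregman term yields $e^{\beta_t}(f(x_t)-f(x^\star)) \le \mathcal{E}_0$, which is exactly the advertised rate $f(x_t)-f(x^{\star}) \le \mathcal{O}(e^{-\beta_t})$.

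The core of the argument is the derivative computation. Writing $Z_t := x_t + e^{-\alpha_t}\dot{x}_t$, I would first note the identity $\dot Z_t = e^{-\alpha_t}\ddot{x}_t + (1-\dot\alpha_t e^{-\alpha_t})\dot x_t \cdot \text{(etc.)}$, and use the definition \eqref{eq:lec2-bregman-distance} to get
\begin{equation*}
\frac{d}{dt}\mathcal{D}_h(x^{\star}, Z_t) \;=\; \bigl\langle \nabla^2 h(Z_t)\,\dot Z_t,\; Z_t - x^{\star}\bigr\rangle.
\end{equation*}
The plan is then to substitute the Euler-Lagrange equation \eqref{eq:lec2-bregman-euler-lagrange}, which I would first rewrite in the equivalent form
\begin{equation*}
\frac{d}{dt}\bigl(\nabla h(Z_t)\bigr) \;=\; -\,e^{\alpha_t+\beta_t}\,\nabla f(x_t),
\end{equation*}
using $\dot\gamma_t = e^{\alpha_t}$ to collapse the coefficient; this reformulation is the key simplification, and turns the Bregman term's derivative into $-e^{\alpha_t+\beta_t}\langle \nabla f(x_t),\, Z_t - x^{\star}\rangle$. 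For the potential term, the chain rule gives $\dot\beta_t e^{\beta_t}(f(x_t)-f(x^{\star})) + e^{\beta_t}\langle \nabla f(x_t), \dot x_t\rangle$.

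Collecting everything and writing $Z_t - x^{\star} = (x_t - x^{\star}) + e^{-\alpha_t}\dot x_t$, the $\dot x_t$ contributions cancel exactly by design, leaving
\begin{equation*}
\dot{\mathcal{E}}_t \;=\; e^{\beta_t}\Bigl(\dot\beta_t\bigl(f(x_t)-f(x^{\star})\bigr) - e^{\alpha_t}\langle \nabla f(x_t), x_t - x^{\star}\rangle\Bigr).
\end{equation*}
Convexity of $f$ (Definition \eqref{eq:lec2-convexity-def}) gives $\langle\nabla f(x_t), x_t - x^{\star}\rangle \ge f(x_t)-f(x^{\star})$, so the bracket is bounded above by $(\dot\beta_t - e^{\alpha_t})(f(x_t)-f(x^{\star}))$, which is nonpositive precisely by the ideal scaling assumption $\dot\beta_t \le e^{\alpha_t}$. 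Hence $\dot{\mathcal{E}}_t \le 0$.

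The main obstacle, in my view, is bookkeeping rather than any deep inequality: getting the derivative of $\nabla h(Z_t)$ to combine with the $[\nabla^2 h(Z_t)]^{-1}$ appearing in \eqref{eq:lec2-bregman-euler-lagrange} so that the Hessian cancels cleanly, and making the $\dot x_t$ terms vanish, requires carefully using both ideal scaling identities in the right order. A secondary subtlety is justifying the rewriting $\tfrac{d}{dt}\nabla h(Z_t) = -e^{\alpha_t+\beta_t}\nabla f(x_t)$ from \eqref{eq:lec2-bregman-euler-lagrange}; this is where $\dot\gamma_t = e^{\alpha_t}$ earns its keep, and it is precisely what makes the Lyapunov certificate work for the whole family of Bregman Lagrangians at once rather than only for a single choice of scaling.
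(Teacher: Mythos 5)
Your proposal is correct and takes essentially the same route as the paper: the identical Lyapunov function $\mathcal{E}_t = \mathcal{D}_h\left(x^{\star}, x_t+e^{-\alpha_t}\dot{x}_t\right)+e^{\beta_t}\left(f(x_t)-f(x^{\star})\right)$, with $\dot{\mathcal{E}}_t\leq 0$ obtained from convexity of $f$ and the ideal scaling $\dot{\beta}_t\leq e^{\alpha_t}$. Your expression for $\dot{\mathcal{E}}_t$ is the same as the paper's, which merely groups the terms using the Bregman divergence $\mathcal{D}_f\left(x^{\star},x_t\right)$ instead of leaving the inner product explicit.
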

The proof of this result is relatively simple. It involves writing
down a Lyapunov function $\mathcal{E}$ for the dynamics and showing that its time derivative is nonpositive.
\begin{proof}
\begin{align*}
   &\mathcal{E}_t = \mathcal{D}_h\left(x^\star, x_t+e^{-\alpha_t}\dot{x}_t\right)+e^{\beta_t}\left(f(x_t)-f(x^{\star})\right),\\
   &\dot{\mathcal{E}}_t = -e^{\alpha_t+\beta_t}\mathcal{D}_f\left(x^{\star}, x_t\right)
   +e^{\beta_t}\left(\dot{\beta}_t-e^{\alpha_t}\right)(f(x_t)-f(x^{\star})).
\end{align*}
$f$ is convex, and so $\mathcal{D}_f\left(x^{\star}, x_t\right)\geq 0$. The first term in $\dot{\mathcal{E}}_t$ is therefore nonpositive.
Under the ideal scaling, $\dot{\beta}_t-e^{\alpha_t}\leq 0$, and so the second term in $\dot{\mathcal{E}}_t$ is also nonpositive. Thus $\dot{\mathcal{E}}_t\leq 0$.
\end{proof}
With suitable choices of the scaling functions, we can now generate 
a whole family of continuous-time optimization algorithms from
\eqref{eq:lec2-bregman-euler-lagrange} with an accompanying convergence
rate guarantee (in continuous time).
For example, in the Euclidean case, where $h$ is quadratic in $x$, 
if we make the choices $\alpha_t=\log p-\log t$, $\beta_t=p\log t+\log C$, and $\gamma_t=p\log t$ where $C>0$, \eqref{eq:lec2-bregman-euler-lagrange} reduces 
to \eqref{eq:lec2-su-boyd-candes-ode}, the Euler discretization of which
is Nesterov's method.
We can also recover mirror descent, the cubic-regularized Newton method, and
other known discrete-time algorithms from \eqref{eq:lec2-bregman-euler-lagrange},
which serves as a \emph{generator} of accelerated algorithms.
The interested reader may refer to \cite{wibisono2016variational} for details.

Let us now address the problem of mapping back into discrete time.

We begin by noting that the Bregman Lagrangian 
\eqref{eq:lec2-bregman-lagrangian} is a
\emph{covariant} operator. The dynamics generated by covariant operators
has the property that given any two endpoints, the path between them
is fixed. However, we are free to choose $\beta_t$, which controls 
the rate at which this fixed path is traversed.
In continuous time, then, we can obtain arbitrarily fast convergence by
manipulating $\beta_t$, and the algorithmic notion of acceleration that 
we began by wishing to understand \emph{disappears}.
It returns, however, when one considers how to \emph{discretize}
\eqref{eq:lec2-bregman-euler-lagrange}.

It turns out that it is not possible to obtain arbitrarily fast 
\emph{stable} discretizations of \eqref{eq:lec2-bregman-euler-lagrange}.
To understand this, we note that in classical mechanics, 
the dynamics of a system often has conserved
quantities associated with it, such as energy or momentum.
Not all discretization schemes inherit or preserve these conservation laws.
\emph{Symplectic integrators} are a class of discretization schemes that \emph{do}
preserve what is known as phase space volume, and therefore conserve energy, etc.
The method of symplectic integration traces its roots back to 
Jacobi, Hamilton, and Poincar\'{e}, and grew out of the need to discretize
the dynamics of physical systems while preserving the relevant conservation
laws.
Importantly for us, symplectic integrators are provably stable. 
Their stability enables them to take larger step sizes than other kinds of
integrators.
\emph{This} is the origin of acceleration in discrete time.

Symplectic integrators are traditionally defined for time-independent Hamiltonians. (We have thus far been talking about continuous-time algorithms
in a Lagrangian framework, but a simple Legendre transform allows us 
to equivalently frame the discussion in a Hamiltonian framework.)
However, the Bregman Lagrangian \eqref{eq:lec2-bregman-lagrangian} is
time-dependent. Indeed, as we saw, its associated Lyapunov function, which
can be interpreted as an energy, decreases with time. Intuitively, this is
necessary from an optimization standpoint in order to \emph{converge} to 
a point. The question that arises is, how can we use symplectic integrators 
for dissipative systems? In joint work \cite{franca2021} with Guilherme Fran\c{c}a and Ren\'{e} Vidal, we solved this problem by \emph{lifting} the
relevant Hamiltonian to higher dimensions so that it acquires time independence. 
We discretize the time-independent higher-dimensional Hamiltonian, 
and gauge-fix to obtain \emph{rate-preserving} algorithms corresponding to the
original Hamiltonian. 
We call the integrators so-obtained \emph{presymplectic} integrators.
When we use the term ``rate-preserving" to describe an integrator, 
we mean that it preserves the rate of convergence of the continuous-time
dynamics up to an error that can be controlled. 
One of the central results in \cite{franca2021} is that 
pre-symplectic integrators are rate-preserving.
The technical tool used prove this type of result is called \emph{backward error analysis} \cite{Hairer, Hairer2}.

In conclusion, we now have a systematic pipeline for generating accelerated
algorithms with convergence guarantees. The practitioner may (1) ask for 
a certain rate of convergence for a given $f$, (2) write down the corresponding Bregman Hamiltonian, (3) apply a symplectic integrator, and
(4) code up the resulting algorithm.

We mentioned at the beginning of this section that taking a continuous-time
perspective can enable proofs of results we do not yet know how to prove
in discrete time. We give an example of such a result now.
Let us briefly return to the problem of escaping saddles in nonconvex
optimization. We discussed previously that perturbed gradient descent (PGD)
can escape saddles and converge to $\varepsilon$-SOSPs in a number of 
iterations that goes as $1/\varepsilon^2$. This was the content of Theorem \ref{thm:lec2-PGD-convergence}.
In \cite{jin2018-perturbed-accelerated-GD}, we showed that a perturbed version
of \emph{accelerated} gradient descent (PAGD) can do the same in a number 
of iterations that goes as $1/\varepsilon^{7/4}$, i.e., 
it converges \emph{faster} than PGD.
In order to prove this result, we worked in continuous time, studying 
the Hamiltonian that generates PAGD.
The result implies that acceleration can be helpful even in nonconvex
optimization, and not just in the convex case, where all the prior 
discussion of this section has been set.

\paragraph{An open problem.} The continuous-time framework we have
described here only applies to deterministic algorithms. An open 
challenge is to construct a variational framework for diffusions,
which may enable similar illuminating and unifying perspectives on 
accelerated \emph{stochastic} algorithms.

We end this section by briefly
reviewing a few results for stochastic algorithms in continuous time.

\begin{enumerate}
    \item Consider an overdamped Langevin diffusion 
    \begin{equation}
        dx_t = -\nabla U(x_t)\,dt + \sqrt{2}\,dB_t,
        \label{eq:lec2-overdamped-langevin}
    \end{equation}
    where $U:\mathbb{R}^d\rightarrow\mathbb{R}$ is the potential and $B_t$ is 
    standard Brownian motion. 
    The stationary distribution $p^{\star}$ of this diffusion
    is of course the Boltzmann distribution. A discretization of \eqref{eq:lec2-overdamped-langevin}
    is the following Markov Chain Monte Carlo (MCMC) algorithm
    \begin{equation}
        \Tilde{x}_{(k+1)\delta}
        =\Tilde{x}_{k\delta}-\nabla U\left(\Tilde{x}_{k\delta}\right)
        +\sqrt{2\delta}\xi_k,
        \label{eq:lec2-overdamped-langevin-MCMC}
    \end{equation}
    where $k$ is the iteration number, $\delta$ is the time increment,
    and $\xi$ is a $d$-dimensional standard normal random variable.
    A natural question to ask is, how close are we to $p^{\star}$
    after $n$ iterations of the MCMC algorithm?
    Assuming $U(x)$ is $L$-smooth and $\mu$-strongly convex, guarantees of the 
    following type have been proven: if $n\geq\mathcal{O}(d/\varepsilon^2)$, 
    then $D\left(p^{(n)},p^{\star}\right)\leq\varepsilon$, where $D$
    is the relevant distance measure, and $p^{(n)}$ is the probability density
    generated by the MCMC algorithm at the $n^{th}$ iteration. 
    Results exist in the cases where
    $D$ is the total variation distance \cite{dalalyan2014}, the $2$-Wasserstein distance \cite{durmus-moulines}, and the Kullback-Leibler divergence \cite{cheng-bartlett}.
    
    \item Now consider an underdamped Langevin diffusion
    \begin{align}
        &dx_t = v_t \, dt,\nonumber\\
        &dv_t = -\gamma v_t \, dt + \lambda\nabla U(x_t) \, dt + \sqrt{2\gamma\lambda} \, dB_t,
        \label{eq:lec2-underdamped-langevin}
    \end{align}
    where $\gamma$ is the coefficient of friction.
    The stationary distribution $p^{\star}$ is proportional to \\
    $\exp\left(-U(x)-||v||^2/2\lambda\right)$.
    In \cite{cheng-underdamped-langevin}, we are able to make a similar
    guarantee on the MCMC algorithm corresponding to \eqref{eq:lec2-underdamped-langevin} as for \eqref{eq:lec2-overdamped-langevin-MCMC}. 
    That is, assuming $U(x)$ is $L$-smooth and $\mu$-strongly convex, 
    if $n\geq\mathcal{O}(\sqrt{d}/\varepsilon)$, 
    then $D\left(p^{(n)},p^{\star}\right)\leq\varepsilon$, where $D$
    is the $2$-Wasserstein distance.
    This is a faster rate than in the overdamped case, where we needed
    the square of the number of iterations required here to make the
    same guarantee on the distance between $p^{(n)}$ and $p^{\star}$.
    The intuition behind this faster rate lies in the fact that
    underdamped diffusions are smoother than overdamped ones,
    and therefore easier to discretize.
    
    The proof technique in \cite{cheng-underdamped-langevin} involves
    another coupling time argument. We saw one of these in the discussion
    on how PGD escapes saddle points. In this case we study the coupling
    time of two Brownian motions that are reflections of each other.
    
    \item An underdamped Langevin diffusion corresponds to Nesterov
    acceleration in the simplex of probability measures \cite{ma2019}.
    \item Higher-order Langevin diffusion yields an accelerated MCMC algorithm \cite{mou2021}.
    \item Smoothness isn't necessary for Langevin diffusion \cite{chatterji2020}.
    \item Langevin-based algorithms can achieve logarithmic regret on multi-armed bandits \cite{mazumdar20a}.
\end{enumerate}

Now that we have some context in the results just stated, we can make
precise the type of open problem mentioned earlier. Suppose we wish to 
diffuse (under some stochastic differential equation) to a target
distribution $p^{\star}$. What equation gets us to $p^{\star}$ the fastest, 
and what is the correct mathematical formulation for this problem?

\subsection{Variational Inequalities: From Minima to Nash Equilibria and Fixed Points}
\label{sec:lec2-finding-nash-equilibria}

In this last section of the lecture, we will introduce the setting
for the third lecture.
From now on, we will be concerned with how to find (Nash) equilibria of certain types of 
functions $f$ (we will make this precise). The discussion will break
with all the technical material we have seen so far in an important way: Nash 
equilibria are \emph{saddle points} of $f$. Therefore, instead of 
studying algorithms that avoid saddle points, we will talk about 
algorithms that \emph{converge} to saddle points. In the next lecture,
we will study two such algorithms, the proximal point method, and the
extragradient method.

In the following, we introduce (1) zero-sum games, 
(2) Nash equilibria, and (3) variational inequalities.
We then give some context for why we introduce these ideas, and for
why we are interested in computing Nash equilibria.

\subsubsection{Two-Player Zero-Sum Games}

Consider a function $f:\mathbb{R}^{d_1}\times\mathbb{R}^{d_2}\rightarrow\mathbb{R}$, 
involving two ``players'' $x_1$ and $x_2$. One player acts
to minimize $f$, and the other acts to maximize it.
This leads to the following optimization problem, also called a 
two-player zero-sum \emph{game}:
\begin{equation}
    \min_{x_1\in\mathbb{R}^{d_1}}\max_{x_2\in\mathbb{R}^{d_2}} f(x_1, x_2).
    \label{eq:lec2-zero-sum-game}
\end{equation}
The game is called ``zero-sum" because the cost functions of players 
$x_1$ and $x_2$, respectively $f$ and $-f$, add to zero.
A simple instance of \eqref{eq:lec2-zero-sum-game} is the 
zero-sum bilinear game $f(x_1, x_2)=x_1^{\top}A x_2$, 
where $A$ is a positive semi-definite symmetric matrix.
\eqref{eq:lec2-zero-sum-game} is a member of the set of \emph{convex-concave} games, 
called so since $f$ is convex in $x_1$ and concave in $x_2$. 
Convex-concave games are in turn a subset of the family of \emph{minimax} games, 
which are a subset of the family of \emph{monotone} games. 
This last piece of terminology and its significance will be explained Section 
\ref{sec:lec3-monotone-operators}.

The solutions of \eqref{eq:lec2-zero-sum-game} are saddle 
points (and \emph{not} minima) of $f$; they are the set of points
$\left(x_{1}^{\star}, x_2^{\star}\right)$ such that
\begin{equation}
    f\left(x_{1}^{\star}, x_2\right)\leq f\left(x_{1}^{\star}, x_{2}^{\star}\right)
    \leq f\left(x_{1}, x_{2}^{\star}\right).
\end{equation}
In the parlance of game theory, some (but not, in general, all) of the 
solutions of the optimization problem that defines the dynamics of a game 
are called \emph{equilibria}.

We have already encountered one example of a (Stackelberg) game in 
Section \ref{sec:lec1-strategic-classification}. 
Stackelberg games are sequential (players take turns to play).
\eqref{eq:lec2-zero-sum-game} is not a sequential game.
Its equilibria are called \emph{Nash equilibria}.
We will define a Nash equilibrium for an $N$-player game formally later.
Intuitively, a Nash equilibrium is a saddle point of $f$ that 
neither player is incentivized to move away from.
When we introduce $N$-player games,
this intuition will carry over: Nash equilibria will correspond
to points where each player is as happy as possible 
\emph{relative to what all the other players are doing}.

\subsubsection{Variational Inequalities}

Two-player zero-sum games (and indeed, the larger class of monotone games) can be written as instances of a general 
class of optimization problems called \emph{variational inequalities} (VIs).

\begin{definition}[Variational Inequality]
Given a subset $\mathcal{X}$ of $\mathbb{R}^d$ and an operator (or vector field)
$F:\mathcal{X}\rightarrow\mathbb{R}^d$,
a \emph{variational inequality} is an optimization problem of the 
following form. Find $x^{\star}$ such that 
\begin{equation}
    \langle F(x^{\star}), \, x-x^{\star}\rangle \geq 0 \,\,\, \forall x\in\mathcal{X}.
    \label{eq:lec2-VI-def}
\end{equation}
\end{definition}
Note that $x^{\star}$ need not always be unique, and that $\mathcal{X}$ 
may or may not be convex. (There are certain requirements on the topology 
of $\mathcal{X}$ but we will omit discussion of these here.)
Variational inequalities unify the formulations of unconstrained 
and constrained optimization problems, as well optimization problems 
involving equilibria, and many others.
If we replace the inequality with equality in \eqref{eq:lec2-VI-def}, 
and $F$ is a gradient field, then we recover classical optimization; 
furthermore, if $\mathcal{X}$ is (a proper subset of) $\mathbb{R}^n$, 
then the optimization problem is (constrained) unconstrained.

In the language of variational inequalities, the two-player zero-sum
game is represented by the vector field
\begin{equation}
    F(x)
    =
    \begin{pmatrix}
    \nabla_{x_1}f(x_1, x_2)\\
    -\nabla_{x_2}f(x_1, x_2)
    \end{pmatrix},
    \label{eq:lec2-two-player-game-vector-field}
\end{equation}
with $\mathcal{X}=\mathcal{X}_1\otimes\mathcal{X}_2$.
Note that $F$ is \emph{not} the gradient of an underlying function.

Representing a game as a VI allows us to express the problem
of computing the equilibria of the game as the problem of computing the fixed points of 
the relevant operator $F$. Although on the face of it this mathematical mapping may not appear 
to confer an advantage, in fact there exists a vast machinery to compute the fixed points of
nonlinear operators that we can take advantage of through it.
We will come back to this point in the next section.

\subsubsection{Nash Equilibria}

Finally, let us define the notion of a Nash equilibrium. 
Consider a game 
of
$N$ players $x_i$.
Let each $x_i\in\mathbb{R}^{d_i}$. 
Stacking all the players into a vector gives the
\emph{strategy vector} $x$, which lives in $\mathbb{R}^{\sum_{i=1}^{N}d_i}$. 
Each player has an associated cost function
$g_i(x_i,x_{-i})$, where we use $x_{-i}$ to denote the $\sum_{j\neq i}d_j-$ dimensional 
vector representing all players except $x_i$.
The \emph{solution} for player $i$ lives in the \emph{solution set} 
$S_i(x_{-i})$, which is the set of solutions to the optimization problem 
$\min_{x_i} g_i(x_i, x_{-i})$
such that $x_i\in K_i\subseteq\mathbb{R}^{d_i}$ where $K_i$ is a constraint set.

\begin{definition}[Nash Equilibrium]
A \emph{Nash equilibrium} is a vector $x^{\star}\in\mathbb{R}^{\sum_{i=1}^{N}d_i}$ s.t.
$x_{i}^{\star}\in S_i\left(x_{-i}^{\star}\right)$ $\forall i$.
\end{definition}

The problem of finding a Nash equilibrium can be written as a variational inequality with
\begin{equation}
\mathcal{X}=\mathcal{K}_1\otimes\dots\otimes\mathcal{K}_N,\,\,\,
    F(x)
    =
    \begin{pmatrix}
    \nabla_{x_1}g_1(x)\\
    \vdots\\
    \nabla_{x_N}g_N(x)
    \end{pmatrix},
    \label{eq:lec2-variational-formulation-nash}
\end{equation}
if and only if each $K_i$ is a closed convex subset of $\mathbb{R}^{d_i}$
and each $g_i$ is a convex function of $x_i$.
See \cite{facchinei-pang}, Proposition 1.4.2, for a proof of this fact.

Exercise to the reader: check that if $N=2$, $g_1=f$, and $g_2=-f$, then \eqref{eq:lec2-variational-formulation-nash}
reduces to a two-player zero sum game.

Nash equilibria are not the only equilibria that can be represented
as solutions of variational inequalities.
Some other interesting examples, which we will not address here, are
Markov perfect equilibria (from probability theory), Walrasian equilibria (from economics), 
and the equilibria of frictional contact problems (from mechanics).

Armed with the definitions of games and Nash equilibria, we can
briefly mentioning a gradient-based method developed
in collaboration with Eric Mazumdar and S. Shankar Sastry for finding
local Nash equilibria in two-player zero-sum games \cite{mazumdar2019-on-local-nash}.
This example also pulls together
some of the ideas we saw in the last section, namely of working in 
continuous time and paying attention to symplectic geometry.
Consider the problem \eqref{eq:lec2-zero-sum-game}.
We have mentioned that the solutions to this optimization problem
are saddle points of $f$. However, not \emph{all} saddle points of $f$ 
are Nash equilibria. The latter are
specifically saddle points that are \emph{axis-aligned}; we refer the 
reader to the paper \cite{mazumdar2019-on-local-nash} for further details
on what this means.
Prior to our work, 
it was known that gradient-based algorithms for finding Nash equilibria could
(in the worst case) diverge, or get stuck in limit cycles (i.e., simply
follow the flow of the vector field). We showed that in addition, 
these algorithms could also converge to non-Nash fixed points.
The distinction between non-Nash and Nash fixed points
has to do with the symplectic component of the vector field. 
When the symplectic component of the vector field near a fixed point 
is removed, only Nash fixed points persist. 
We exploited this fact and developed an algorithm we called ``local symplectic
surgery" that could effect this removal, and therefore provably converge
only to Nash equilibria.

\subsubsection{Computing Nash Equilibria}

Why are we interested in computing Nash equilibria?
And why do we bring up the framework of variational inequalities 
in this context? We explain.

Nash equilibria are hard to compute in general 
\cite{daskalakis2009-complexity-Nash}.
This fact has profound implications for the mathematical study of markets.
Much effort in microeconomics has been made to
understand the properties of Nash equilibria under the assumption
that the game/system/market under consideration is already in such an 
equilibrium.
For this assumption to hold, the players must be able to arrive at/compute
the equilibrium.
However, the negative computability result of \cite{daskalakis2009-complexity-Nash}
implies that this is generally not possible.
The questions then are, what are the classes of
games in which it \emph{is} tractable to compute Nash equilibria,
and what are the algorithms that are needed for this purpose?

It turns out that the class of two-player zero-sum games is tractable
in this specific sense. We have already seen that as long as the cost
function is convex, two-player zero-sum games can be written as 
variational inequalities, and in particular, variational inequalities
involving \emph{monotone} operators. We can therefore apply and build on mathematical
machinery originally developed to compute the fixed points of monotone
operators to compute the Nash equilibria of two-player
zero-sum games.

We will define monotonicity in the
next lecture. For now, it suffices to note that monotone operators are 
a class of operators for which it is
guaranteed that a fixed point exists, or equivalently, it is 
a class of operators
for which it is possible to guarantee that the corresponding variational 
inequalities actually have solutions.

Another reason why the language of variational inequalities is useful to
us is that it naturally centralizes the role of dynamics in understanding
the behavior of interacting agents. As we have seen in \cite{zrnic2021leads},
and in our discussion of nonconvex optimization,
the dynamics \emph{influences which fixed point we arrive at},
and so it is important to understand the relationship between 
a fixed-point-finding algorithm and the properties of the fixed points
it converges to. This is helpful both to understand the behaviors of existing
markets and to effectively design new markets with certain properties
(and not others) for the future.

The literature on fixed-point-finding algorithms for monotone operators
is vast. To simplify the exposition in what follows, and to make
good use of the previous discussion, we will take a relatively unusual
approach. In the next lecture, we will first establish that 
gradient descent (which does well at finding the minima of convex
functions) generally diverges on monotone problems. 
We will identify the reason why, and then ask how we can ``fix"
it. This will motivate the discussion of the proximal
point method and the extragradient algorithm.

\pagebreak

\section{Computing Equilibria}

Consider the two kinds of vector fields pictured in 
Fig.\,\ref{fig:lec3-fixed-points}. In Fig.\,\ref{fig:lec3-fixed-points}(a),
the flow is toward a point; our discussion of convex optimization in
Lectures 1 and 2 was in this context.
In Fig.\,\ref{fig:lec3-fixed-points}(b) the flow moves \emph{around} a point.
Both the flows pictured in Fig.\,\ref{fig:lec3-fixed-points}
are generated by monotone operators. 
Given a monotone game, it will not a priori be apparent which of the two
types of flows we are dealing with. To reliably compute equilibria,
therefore, we will need algorithms that handle both cases. 
We will spend the rest of these notes
developing and studying such algorithms.

\subsection{Monotone Operators}
\label{sec:lec3-monotone-operators}

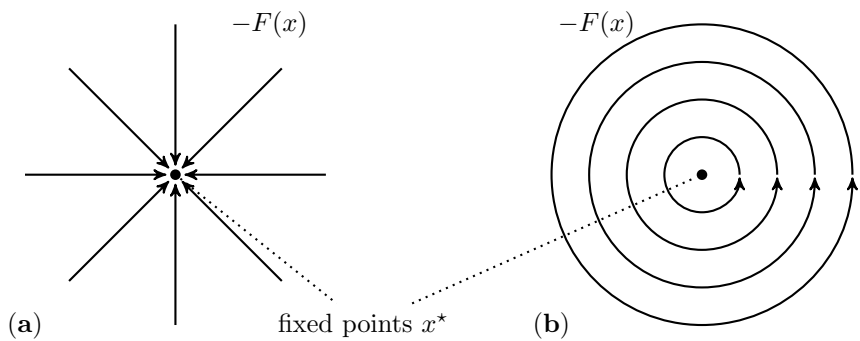
\begin{figure}[!h]
    \centering
\begin{tikzpicture}[>=stealth',shorten >=1pt,auto,node distance=3cm,
  thick,main node/.style={circle,fill=black,minimum size=4pt,inner sep=0pt}]

  \node[main node] (1) at (0,0) {};
  \node[main node] (2) at (7,0) {};

  \foreach \angle in {0,45,...,315}
    \draw[->] (\angle:2cm) -- (1);

  \foreach \radius in {0.5,1,1.5,2}
    \draw[->] (2) ++(\radius,0) arc (0:360:\radius cm);

  \node (label) at (2.5,-2) {fixed points $x^{\star}$};
  \draw[dotted] (1) -- (label) -- (2);

  \node at (1.25,2) {$-F(x)$};
  \node at (-2, -2) {$\mathbf{(a)}$};
  \node at (5.6,2) {$-F(x)$};
  \node at (5, -2) {$\mathbf{(b)}$};

\end{tikzpicture}
    \caption{The notion of a fixed point is broader than that of a minimum. Here we have examples of two types of vector fields $F(x)$. In (a), the negative flow (indicated by arrow heads) of the field is toward the fixed point; this fixed point could be a minimum. In (b), the negative flow of the vector field is \emph{around} the fixed point; this fixed point is not a minimum.
    In this lecture, we study algorithms that compute the fixed points of vector fields associated with monotone operators, which can be either of the flavor (a) \emph{or} (b).
    }
    \label{fig:lec3-fixed-points}
\end{figure}

To begin the discussion, we introduce the notion of monotonicity, which 
generalizes the notion of convexity, and a strengthening of this notion 
called \emph{strong monotonocity}.
\begin{definition}[Monotone Operator]
An operator $F$ is said to be \emph{monotone} on a set $\mathcal{X}$ if
\begin{equation}
    \langle F(x)-F(y), x-y \rangle \geq 0 \,\,\forall x, y \in\mathcal{X}.
    \label{eq:lec3-monotonicity}
\end{equation}
The property \eqref{eq:lec3-monotonicity} is known as \emph{monotonicity}.
\end{definition}
We will assume that the constraint set $\mathcal{X}$ is convex. 
When this is the case, we are guaranteed
that a fixed point of $F$ exists.

We remind the reader that our notation for a fixed point of $F$ is $x^{\star}$.
Taking $x=x^{\star}$ and $y=x$ and setting $F(x^{\star})=0$ in \eqref{eq:lec3-monotonicity},
we see that monotonicity of $F$ implies that the angle $\theta$ between
the vectors $-F(x)$ and $x^{\star}-x$ for an arbitrary $x$ can be no greater than $\pi/2$.
This is illustrated in Fig.\,\ref{fig:lec3-monotonicity}.

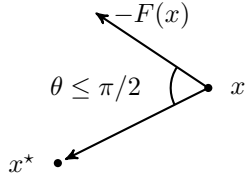
\begin{figure}[!h]
\centering
\begin{tikzpicture}[>=stealth',thick,inner sep=8pt]

  \coordinate[label=right:$x$] (x) at (2,0);
  \draw [fill] (x) circle (.04);
  \coordinate[label=left:$x^{\star}$] (xstar) at (0,-1);
  \draw [fill] (xstar) circle (.04);
  \coordinate (y) at (0.5,1);

  \draw[->,shorten >=3pt] (x) -- (xstar);
  \draw[->] (x) -- node[midway,above] {$-F(x)$} (y);

  \pic[draw=black, angle eccentricity=1.2, angle radius=0.5 cm] {angle=y--x--xstar};

  \node at (0.5,0) {$\theta \leq \pi/2$};

\end{tikzpicture}
    \caption{
    Monotonicity implies that the angle $\theta$ between
    the operator $-F(x)$ and the vector pointing from $x$
    to $x^{\star}$ is at most $\pi/2$ for all $x$. To see this in 
    \eqref{eq:lec3-monotonicity}, take $x=x^{\star}$, $y=x$, and
    without loss of generality let $F(x^{\star})=0$.
    }
    \label{fig:lec3-monotonicity}
\end{figure}

\begin{definition}[Strong Monotonicity]
An operator $F$ is said to be \emph{strongly monotone} on a set $\mathcal{X}$ if
\begin{equation}
    \langle F(x)-F(y), x-y \rangle \geq \mu||x-y||^2 \,\,\forall x, y \in\mathcal{X},
    \label{eq:lec3-strong-monotonicity}
\end{equation}
for some $\mu>0$.
\end{definition}
Strong monotonicity is a strengthening of monotonicity. It guarantees
that the angle between $-F(x)$ and $x^{\star}-x$ is \emph{strictly} less than $\pi/2$.
Furthermore, given any arbitrary point $x$, strong monotonicity of $F$ ensures that
the fixed point lies on or above the quadratic function $\mu||x-y||^2$ centered at $x$.
In contrast, monotonicity merely implies that $x^{\star}$ is on the same side of the
hyperplane defined by $-F$ at $x$. This is illustrated in Fig.\,\ref{fig:lec3-strong-monotonicity}.

\begin{figure}[!h]
\vspace{0.2in}
    \centering
\begin{tikzpicture}
     \draw[purple] (-5,4) parabola bend (0,0) (5,4);
     \node[purple] at (4.5,2) {$\mu||x-y||^2$};
     \draw[purple,line width=0.4mm,->] (0,0) node[black,anchor=north]{$x$} -- (-2,3) node[purple,anchor=south] {$x^{\star}_{SM}$};
     \draw[line width=0.4mm,->] (0,0) -- (0,2) node[anchor=south]{} node[midway, anchor=west]{$-F(x)$};
     \draw[cyan,line width=0.4mm,->] (0,0) -- (3,0.3) node[anchor=south] {$~~~x^{\star}_M$};
     \draw (2,4) node[purple,anchor=north]{Strong Monotonicity};
     \draw (4,1.3) node[cyan,anchor=north]{Monotonicity};
     \draw[black,->] (0,0) -- (3,0) node[right] {$y$};
\end{tikzpicture}
    \caption{While monotonicity
    of $F$ guarantees that $x^{\star}$ is on the same side of the
    hyperplane defined by $-F$ at $x$ as the quadratic 
    $\mu||x-y||^2$, strong monotonicity additionally guarantees that 
    $x^{\star}$ is on or above the quadratic.
    We use the label $x^{\star}_M$ ($x^{\star}_{SM}$) to indicate a possible fixed point of $F$ when
    $F$ is monotone (strongly monotone). In general, $x^{\star}_M$ may be either aligned with 
    the axis $y$ or anywhere above it, while $x^{\star}_{SM}$ is confined to be on or above 
    $\mu||x-y||^2$.
    }
    \label{fig:lec3-strong-monotonicity}
\end{figure}
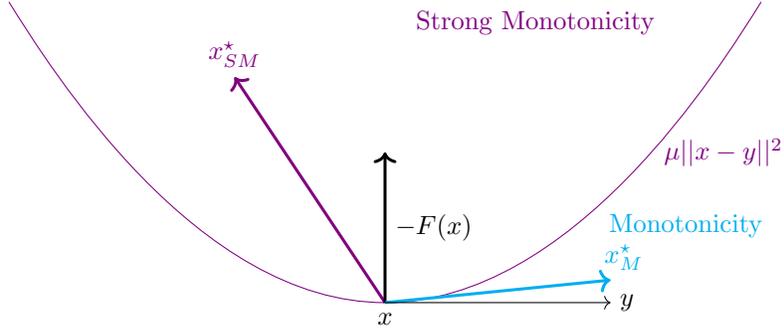

\subsection{Fixed-Point Finding Algorithms}
\label{sec:lec3-fixed-point-algorithms}

\subsubsection{A Naive Algorithm}

We will now see that a (naive) gradient-descent-like algorithm diverges
on monotone problems. The cause of the divergent behavior is that
monotonicity allows the angle $\theta$ between $-F(x)$ and $x^{\star}-x$
to be as large as $\pi/2$
(see Fig.\,\ref{fig:lec3-monotonicity}). 
For problems of the type in Fig.\,\ref{fig:lec3-fixed-points}(b), 
this implies that
the algorithm may simply follow the flow of $F$ around $x^{\star}$ and not
make progress towards it, or it may even diverge.  As the reader may have guessed, we will be
able to fix this problem if we assume that $F$ is strongly monotone, ensuring
that $\theta$ is strictly bounded away from $\pi/2$.

Using the notation $\proj_{\mathcal{X}}(x)$ for the projection 
of the vector $x$ onto the convex constraint set
$\mathcal{X}$, let us study the behavior of the iteration
\begin{equation}
    x_{k+1} = \proj_{\mathcal{X}}{\left(x_k-\eta F(x_k)\right)}.
    \label{eq:lec3-naive-alg}
\end{equation}
Does \eqref{eq:lec3-naive-alg} converge to $x^{\star}$?

\begin{figure}[!h]
    \centering
\begin{tikzpicture}[>=stealth',thick]

  \draw (2,0) arc (0:180:2cm) node[midway,below] {$\mathcal{X}$};

  \coordinate[label=above right:$x$] (x) at (3,3);
  \draw [fill] (x) circle (.04);
  \coordinate[label=above left:$y$] (y) at (-3,3);
  \draw [fill] (y) circle (.04);
  \coordinate (px) at (1.414,1.414);
  \draw [fill] (px) circle (.04);
  \coordinate (py) at (-1.414,1.414);
  \draw [fill] (py) circle (.04);
  
  \node[above right = -0.2 cm and 0.3 cm of px] {$\proj_{\mathcal{X}}(x)$};
  \node[above left = -0.3 cm and 0.2 cm of py] {$\proj_{\mathcal{X}}(y)$};

  \draw[->, shorten >= 3pt] (x) -- (px);
  \draw[->, shorten >= 3pt] (y) -- (py);

\end{tikzpicture}
    \caption{Projection onto a convex set $\mathcal{X}$ is a contractive
    operation, which means that the difference of projections is at most
    the difference of the arguments. 
    }
    \label{fig:lec3-projection-on-convexset}
\end{figure}
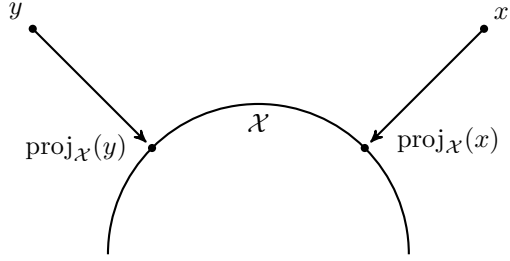

As will be familiar by now, we begin by looking at the distance
$||x_{k+1}-x^{\star}||$:
\begin{equation}
    \left|\left|x_{k+1} - x^{\star}\right|\right|^2 
    =\left|\left|\proj_{\mathcal{X}}\left(x_k-\eta F(x_k)\right) - 
    \proj_{\mathcal{X}}\left(x^{\star}-\eta F(x^{\star})\right)\right|\right|^2.
    \label{eq:lec3-naive-iter-difference-of-projections}
\end{equation}
Projection onto a convex set is a contractive operation.\footnote{In fact, projection onto a convex set is a \emph{firmly non-expansive} procedure. We will formally define firm non-expansivity later in this lecture.}
This implies that difference of two projections is at most the difference 
of the arguments
(see Fig.\,\ref{fig:lec3-projection-on-convexset} for a pictorial proof).
We can use this fact to drop the projection operator 
in \eqref{eq:lec3-naive-iter-difference-of-projections} at the expense of an inequality.
Doing so, and expanding the resulting squared norm, we have
\begin{align}
    \left|\left|x_{k+1} - x^{\star}\right|\right|^2
    &\leq
    \left|\left|x_k-\eta F(x_k)-x^{\star}+\eta F(x^{\star})\right|\right|^2
    \nonumber\\
    &=\left|\left|x_k-x^{\star}\right|\right|^2
    -2\eta\langle F(x_k)-F(x^{\star}), \,\,\,x_k-x^{\star}\rangle 
    +\eta^2\left|\left|F(x_k)-F(x^{\star})\right|\right|^2.
    \label{eq:lec3-naive-iter-intermediate}
\end{align}
Applying the monotonicity property \eqref{eq:lec3-monotonicity}
to the second term in \eqref{eq:lec3-naive-iter-intermediate}, we arrive at
\begin{equation}
    \left|\left|x_{k+1} - x^{\star}\right|\right|^2
    \leq
    \left|\left|x_k-x^{\star}\right|\right|^2
    +\eta^2\left|\left|F(x_k)-F(x^{\star})\right|\right|^2.
    \label{eq:lec3-naive-iter-diverges}
\end{equation}
Since both the terms on the right-hand side of \eqref{eq:lec3-naive-iter-diverges} are positive, we cannot conclude that $||x_{k+1}-x^{\star}||$
is shrinking with iteration number.
Thus, the naive gradient-descent-like iteration \eqref{eq:lec3-naive-alg} 
may diverge on generally monotone problems.

As we have already mentioned, we can fix this by assuming that $F$ is not
just monotone but strongly monotone. This allows us to retain the second term
in \eqref{eq:lec3-naive-iter-intermediate}.
It is important to retain this term, since it is the only one
with a negative sign in \eqref{eq:lec3-naive-iter-intermediate}, and is therefore
needed to counteract the expansive effect of the third term.
We will also assume that $F$ is $L$-Lipschitz.
With these two assumptions, we can easily show per-iteration contraction of the 
distance between the iterate and the fixed point.
Starting once again from \eqref{eq:lec3-naive-iter-intermediate}, we have
\begin{align}
\left|\left|x_{k+1} - x^{\star}\right|\right|^2 
    &\leq\left|\left|x_k-x^{\star}\right|\right|^2
    -2\eta\langle F(x_k)-F(x^{\star}), \,\,\,x_k-x^{\star}\rangle 
    +\eta^2\left|\left|F(x_k)-F(x^{\star})\right|\right|^2
    \nonumber\\
    &\leq\left|\left|x_k-x^\star\right|\right|^2
    -2\eta\mu\left|\left|x_k-x^{\star}\right|\right|^2
    +\eta^2 L^2||x_k - x^\star||^2
    \nonumber\\
    &=\left(1-2\eta\mu+\eta^2L^2\right)||x_k - x^\star||^2
    \nonumber\\
    &\overset{\eta=\mu/L^2}{=}\left(1-\frac{\mu^2}{L^2}\right)||x_k-x^{\star}||^2.
    \label{eq:lec3-naive-iter-convergence}
\end{align}

With \eqref{eq:lec3-naive-iter-convergence}, we have proved the following theorem.
\begin{theorem}
\label{thm:naive-iter-strongly-monotone-F}
For a $\mu$-strongly monotone and $L$-Lipschitz operator $F$ and a convex
constraint set $\mathcal{X}$,
the iterates of the gradient-descent-like algorithm 
\eqref{eq:lec3-naive-alg} with step size $\mu/L^2$ converge 
to the fixed point $x^{\star}$ of $F$ with the rate $1-(\mu/L)^2$.
\end{theorem}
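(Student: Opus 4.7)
The plan is to follow the same chain of reasoning developed just before the theorem statement, applying the strong monotonicity and Lipschitz hypotheses to the inequality chain to extract a per-iteration contraction, and then tuning the step size $\eta$ to make the contraction factor as small as possible. First I would start from the update \eqref{eq:lec3-naive-alg} and write
\begin{equation*}
\|x_{k+1}-x^{\star}\|^{2} = \|\proj_{\mathcal{X}}(x_k-\eta F(x_k)) - \proj_{\mathcal{X}}(x^{\star}-\eta F(x^{\star}))\|^{2},
\end{equation*}
using the fact that $x^{\star}$ is a fixed point of the projected map (so inserting $\proj_{\mathcal{X}}(x^{\star}-\eta F(x^{\star}))=x^{\star}$ costs nothing). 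The non-expansivity of projection onto a convex set (illustrated in Fig.\,\ref{fig:lec3-projection-on-convexset}) then lets me drop the two projections at the cost of an inequality, leaving a purely Euclidean quantity to analyze.

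Next I would expand the resulting squared norm exactly as in \eqref{eq:lec3-naive-iter-intermediate}, which produces a cross term $-2\eta\langle F(x_k)-F(x^{\star}),\, x_k-x^{\star}\rangle$ and a quadratic term $\eta^{2}\|F(x_k)-F(x^{\star})\|^{2}$. Here is where the two hypotheses enter separately: I apply $\mu$-strong monotonicity \eqref{eq:lec3-strong-monotonicity} to upper bound the cross term by $-2\eta\mu\|x_k-x^{\star}\|^{2}$, and $L$-Lipschitzness to upper bound the quadratic term by $\eta^{2}L^{2}\|x_k-x^{\star}\|^{2}$. Combining gives
\begin{equation*}
\|x_{k+1}-x^{\star}\|^{2} \leq (1-2\eta\mu+\eta^{2}L^{2})\,\|x_k-x^{\star}\|^{2},
\end{equation*}
which is the single per-iteration contraction we need. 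Iterating this bound over $k$ yields geometric convergence in squared distance.

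Finally, I would optimize the constant $1-2\eta\mu+\eta^{2}L^{2}$ over $\eta>0$. Differentiating and setting to zero gives $\eta=\mu/L^{2}$, which plugged back in yields the contraction factor $1-\mu^{2}/L^{2}$, matching the claim of the theorem. Strong monotonicity together with Lipschitzness automatically enforces $\mu\leq L$, so this factor lies in $[0,1)$ as required.

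The only subtle point is ensuring that retaining the strong-monotonicity term is what rescues us from the divergent bound \eqref{eq:lec3-naive-iter-diverges}: without the strictly positive $\mu$, the cross term cannot be made negative enough to dominate the $\eta^{2}L^{2}$ expansion. I do not expect any real obstacle beyond bookkeeping here, since every ingredient (non-expansive projection, strong monotonicity, Lipschitz bound, step-size optimization) has already appeared in the exposition; the proof is essentially the display \eqref{eq:lec3-naive-iter-convergence} assembled carefully.
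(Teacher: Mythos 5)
Your proposal is correct and follows essentially the same argument as the paper: start from the projected update, use that $x^{\star}=\proj_{\mathcal{X}}(x^{\star}-\eta F(x^{\star}))$ together with non-expansivity of the projection, expand the square, bound the cross term by strong monotonicity and the quadratic term by Lipschitzness to get the factor $1-2\eta\mu+\eta^{2}L^{2}$, and set $\eta=\mu/L^{2}$ to obtain the rate $1-\mu^{2}/L^{2}$ as in \eqref{eq:lec3-naive-iter-convergence}. The only cosmetic difference is that you derive the step size by optimizing the contraction constant, whereas the paper simply substitutes $\eta=\mu/L^{2}$; these coincide.
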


The rate of convergence in Theorem \ref{thm:naive-iter-strongly-monotone-F}
is slower than the (roughly) $1-\mu/L$ rate achieved by gradient descent on 
strongly convex functions in Theorem \ref{thm:GD-on-strongly-convex-f}.
In fact, it is possible to extract this faster rate from \eqref{eq:lec3-naive-alg}
by making one further assumption on $F$, that of \emph{co-coercivity}. 
We give its definition next.

\begin{definition}[Co-coercivity]
An operator $F$ is said to be \emph{co-coercive} on a set $\mathcal{X}$ if
there exists an $\alpha>0$ such that
\begin{equation}
    \langle F(x)-F(y), x-y \rangle \geq 
    \frac{1}{\alpha}||F(x)-F(y)||^2 \,\,\,\,\forall x, y \in\mathcal{X}.
    \label{eq:lec3-cocoercivity}
\end{equation}
\end{definition}

In fact, a $\mu$-strongly monotone and $L$-Lipschitz operator is co-coercive with $\alpha=L$, but $L$ is a suboptimal value of $\alpha$.
It is possible to achieve a faster rate by identifying the optimal value of $\alpha$.
In particular, if $F$ is $\mu$-strongly monotone and $\alpha$-co-coercive, it is possible to achieve a convergence rate of $1-\mu/\alpha$. 
Lastly, we note that if $F$ is the gradient field $\nabla f$ of a strongly convex and $L$-smooth function $f$, then it is $L$-co-coercive.
This fact and the convergence result stated just before it together recover the fast convergence rate of gradient descent from Theorem \ref{thm:GD-on-strongly-convex-f}
in the variational inequality framework.

\subsubsection{The Proximal Point Method}

Let us take a second look at \eqref{eq:lec3-naive-iter-diverges}, 
which implies that if $F$ is monotone, then applying an iteration of 
\eqref{eq:lec3-naive-alg} can in general only increase the distance to the optimum. In the special case where
$F$ is the gradient field of a convex and $L$-smooth function, however, 
\eqref{eq:lec3-naive-alg} reduces to gradient descent, which \emph{does} converge. In other words,
\eqref{eq:lec3-naive-alg} diverges for general 
monotone vector fields 
(such as the example in Fig.\,\ref{fig:lec3-fixed-points}(b)), 
but not in the special case of a gradient field 
(Fig.\,\ref{fig:lec3-fixed-points}(a)).
We will now briefly introduce a modification to 
\eqref{eq:lec3-naive-alg} that we will show causes it to diverge on 
\emph{all} monotone problems. We will then fix this algorithm to 
obtain another algorithm---the proximal point method---that converges 
for all monotone problems.

Let us consider the update
\begin{equation}
    x_{k+1}=x_k+\eta F(x_k)=\left(I+\eta F\right)x_{k},
    \label{eq:lec3-generally-divergent-update}
\end{equation}
where $\eta >0$ is the step size, $I$ is the identity operator and 
$F$ is a monotone operator.
A simple calculation shows that the operator $\left(I+\eta F\right)$ is 
expansive:
\begin{align}
    \left|\left|\left(I+\eta F\right)(x)-\left(I+\eta F\right)(y)\right|\right|^2
    &=\left|\left|x-y\right|\right|^2 +2\eta\langle F(x)-F(y), x-y\rangle + \eta^2\left|\left|F(x)-F(y)\right|\right|^2
    \nonumber\\
    &\geq \left|\left|x-y\right|\right|^2 + \eta^2\left|\left|F(x)-F(y)\right|\right|^2.
\end{align}
Applying $\left(I+\eta F\right)$ to two points $x$ and $y$ thus strictly 
increases the squared distance between them. This implies that 
\eqref{eq:lec3-generally-divergent-update} always diverges.
This can also be seen simply by inspection in Fig.\,\ref{fig:lec3-fixed-points}: following the flow $F$ (as opposed to $-F$, which is what gradient descent does) is a bad strategy both when $F$ is a gradient field and when it isn't.

We can use the expansive property of $\left(I+\eta F\right)$ to our advantage.
In particular, notice that if we run 
\eqref{eq:lec3-generally-divergent-update} 
\emph{backward in time},
\begin{equation}
    x_k=\left(I+\eta F\right)^{-1}x_{k+1},
    \label{eq:lec3-run-it-backwards}
\end{equation}
then we might expect to obtain a procedure that shrinks
the distance between any two points instead of increasing it.
The \emph{proximal point method}, which we now introduce, 
does precisely this. 
The proximal point update is given by rearranging \eqref{eq:lec3-run-it-backwards} as follows:
\begin{equation}
    x_{k+1}=x_k-\eta F(x_{k+1}).
    \label{eq:lec3-prox-point-update}
\end{equation}
In the form that we present it here, it was first studied by
Rockafellar in 1976 \cite{rockafella1976r-proximal-point}.

\begin{figure}[!h]
    \centering
    \begin{tikzpicture}
        \draw[line width=0.5mm,->,shorten >= 2pt] (0,0) -- (2,2) node[anchor=west] {$x_k$};
        \draw [fill] (2,2) circle (.05);
        \draw[line width=0.5mm,->,shorten >= 2pt] (0,0) -- (3,0) node[anchor=west] {$x_{k+1}=B(x_k)$};
        \draw [fill] (3,0) circle (.05);
        \draw[line width=0.5mm,->,shorten >= 2pt] (0,0) -- (4,-2) node[anchor=west] {$(2B-I)x_k$};
        \draw [fill] (4,-2) circle (.05);
        \draw[line width=0.5mm,-] (2,2) -- (4,-2);
    \end{tikzpicture}
    \caption{Applying the backward operator to $x_{k}$ brings us to the point $x_{k+1}$. This is the proximal point update. Applying the operator $2B-I$ to $x_k$ results in a reflection about $x_{k+1}$.}
    \label{fig:firmly}
\end{figure}
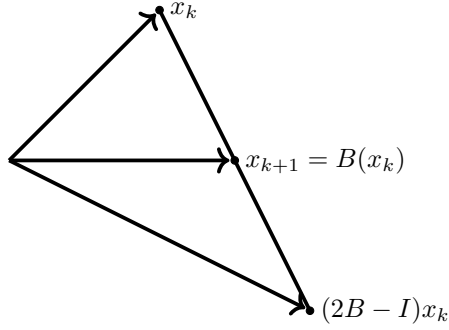

\begin{algorithm}
\caption{Proximal Point Method}
\label{alg:lec3-proximal-point}
\begin{algorithmic}
\Require $\eta>0$\\
\State $x_{k+1}=x_k-\eta F(x_{k+1})$
\end{algorithmic}
\end{algorithm}

Notice that the proximal point update given in
\eqref{eq:lec3-prox-point-update} is an \emph{implicit} equation, i.e.,
it contains $x_{k+1}$ on both sides, and must be solved 
at each iteration.
Since there are multiple ways to solve the implicit equation,
the proximal point method as stated defines a \emph{family} of 
algorithms.

Before we enter the discussion of ways to solve for $x_{k+1}$,
let us study the effect on $||x_{k+1}-x^{\star}||^2$ of applying a
single step of the proximal point method to confirm the intuition
that led us to introduce it. For simplicity, and without loss of generality,
we set $F(x^{\star})$ to zero for the remainder of this lecture.
\begin{align}
    \left|\left|x_{k+1}-x^{\star}\right|\right|^2
    &=\left|\left|x_k-\eta F(x_{k+1})-x^{\star}\right|\right|^2
    \nonumber\\
    &= \left|\left|x_k-x^{\star}\right|\right|^2
    -2\eta\langle F(x_{k+1}), x_k-x^{\star}\rangle
    +\eta^2\left|\left|F(x_{k+1})\right|\right|^2
    \nonumber\\
    &\overset{(a)}{=} \left|\left|x_k-x^{\star}\right|\right|^2
    -2\eta\langle F(x_{k+1}), x_{k+1}-x^{\star}\rangle
    -2\eta\langle F(x_{k+1}), x_k-x_{k+1}\rangle
    +\eta^2\left|\left|F(x_{k+1})\right|\right|^2
    \nonumber\\
    &\leq \left|\left|x_k-x^{\star}\right|\right|^2
    -2\eta^2\langle F(x_{k+1}), F(x_{k+1}) \rangle
    +\eta^2\left|\left|F(x_{k+1})\right|\right|^2
    \nonumber\\
    &= \left|\left|x_k-x^{\star}\right|\right|^2
    -\eta^2\left|\left|F(x_{k+1})\right|\right|^2.
    \label{eq:lec3-proximalpoint-descentlemma}
\end{align}
To arrive at (a), we added and subtracted 
$\langle F(x_{k+1}), x_{k+1}\rangle$ from the line before it.
The second term in (a) is nonpositive by monotonicity of $F$;
we dropped it, and this resulted in the inequality on the next line.
We used \eqref{eq:lec3-prox-point-update} 
to replace $x_k-x_{k+1}$ with $\eta F(x_{k+1})$ in the third term of (a).

We have proved the following descent lemma:
\begin{lemma}[Descent lemma for the proximal point method on monotone operators]
\label{lem:lec3-prox-point-descent-lemma}
Consider a monotone operator $F$. Under the dynamics of Algorithm \ref{alg:lec3-proximal-point} with step size $\eta$,
\begin{equation}
    \left|\left|x_{k+1}-x^{\star}\right|\right|^2
    \leq
    \left|\left|x_k-x^{\star}\right|\right|^2
    -\eta^2\left|\left|F(x_{k+1})\right|\right|^2.
\end{equation}
\end{lemma}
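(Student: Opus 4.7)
The plan is to directly compute $\|x_{k+1} - x^{\star}\|^2$ using the implicit update from Algorithm \ref{alg:lec3-proximal-point}, namely $x_{k+1} = x_k - \eta F(x_{k+1})$. Expanding the square produces a cross term that must be handled carefully. Since the proximal point update is implicit, the natural object against which monotonicity of $F$ can be deployed is $x_{k+1}$ rather than $x_k$, so the key algebraic move will be to reshape the cross term so that it features $x_{k+1} - x^{\star}$ instead of $x_k - x^{\star}$.

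Concretely, I would first substitute $x_{k+1} - x^{\star} = (x_k - x^{\star}) - \eta F(x_{k+1})$ and expand to obtain
\begin{equation*}
    \|x_{k+1} - x^{\star}\|^2 = \|x_k - x^{\star}\|^2 - 2\eta \langle F(x_{k+1}),\, x_k - x^{\star}\rangle + \eta^2 \|F(x_{k+1})\|^2.
\end{equation*}
Then, using the decomposition $x_k - x^{\star} = (x_{k+1} - x^{\star}) + (x_k - x_{k+1})$ inside the inner product, the cross term splits into two pieces: $\langle F(x_{k+1}),\, x_{k+1} - x^{\star}\rangle$, which by monotonicity of $F$ combined with the normalization $F(x^{\star}) = 0$ is nonnegative and may be dropped to yield an upper bound, and $\langle F(x_{k+1}),\, x_k - x_{k+1}\rangle$, which, using the update once more in the form $x_k - x_{k+1} = \eta F(x_{k+1})$, equals $\eta \|F(x_{k+1})\|^2$. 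Collecting the quadratic terms in $F(x_{k+1})$ gives the claimed inequality.

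The only real obstacle is the decomposition step: one has to recognize that monotonicity pairs $F(x_{k+1}) - F(x^{\star})$ with $x_{k+1} - x^{\star}$, not with $x_k - x^{\star}$, and that the residual piece after splitting can be retired by a second use of the implicit update. Once this is seen, the computation is mechanical. Note that plain monotonicity suffices for the dropped term to have the favorable sign — strong monotonicity is not needed — which is precisely what allows the proximal point method to behave as a descent method on the entire monotone class, in contrast to the naive iteration \eqref{eq:lec3-naive-alg}.
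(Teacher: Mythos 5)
Your proposal is correct and follows essentially the same route as the paper's own derivation: expand the square using the implicit update, split the cross term via $x_k - x^{\star} = (x_{k+1}-x^{\star}) + (x_k - x_{k+1})$, drop the monotonicity term (using $F(x^{\star})=0$), and use $x_k - x_{k+1} = \eta F(x_{k+1})$ to collect the quadratic terms. No gaps.
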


Lemma \ref{lem:lec3-prox-point-descent-lemma} guarantees that the distance
to the fixed point shrinks on every iteration of the proximal point
method, and therefore implies convergence.

In the interests of completeness, we will now
give some more vocabulary and state an important theorem concerning
monotone operators.

Let us give the inverse operator $\left(I+\eta F\right)^{-1}$ a name.
We call it the \emph{backward operator} $B$. $B$ is also known as the 
\emph{resolvent} of $F$. We can also define the operator $2B-I$.
See Fig.\,\ref{fig:firmly} for some intuition for the distinction
between the results of applying $B$ versus $2B-I$ to $x_{k}$.

\begin{figure}[!h]
    \centering
    
\begin{tikzpicture}[scale=1, transform shape]
\tikzstyle{point1}=[ball color=cyan, circle, draw=black, inner sep=0.1cm]
    \def\ra{1.86}; 
    \def\rb{2.5}; 

    \coordinate (O) at (0,0);

    \draw[name path=incircle,line width=0.5mm] (O) circle (\ra);

    \draw[line width=0.5mm] (O) circle (\rb);

    \draw [fill] (O) circle (.05);
    \coordinate[label = right:$x^{\star}$] (O);

    \coordinate (A) at (120:\rb);

    \node[above left] at (A) {$x_k$};
    \draw [fill] (A) circle (.05);

    \draw[line width=0.5mm,->] (A) -- ($(A)!3cm!-90:(O)$) node[pos=0.3,above left] {$-F(x_k)$} coordinate (L);
    
    \coordinate(B) at ($3/4*(L)+1/4*(A)$);
    
    \node[above left] at (B) {$x^{GD}_{k+1}$};
    \draw [fill] (B) circle (.05);

    \draw[line width=0.5mm,->] (B) -- ($(B)!3cm!-90:(O)$) node[midway,left] {$-F(x^{GD}_{k+1})$} coordinate (C);

    \coordinate
    (direction) at ($ (C) - (B) $);
    \draw[name path=dotline,line width=0.5mm,->,dotted] (A) -- ($(A) + (direction)$);
    \path[name intersections={of=incircle and dotline}];
    \node at (intersection-1) [right] {$x^{EG}_{k+1}$};
    \draw [fill] (intersection-1) circle (.05);
\end{tikzpicture}
    \caption{
    Following the flow of $-F$ at the point $x_k$ leads us away from $x^{\star}$, to a point $x_{k+1}^{GD}$ (the superscript GD stands for gradient descent). However, taking a step in the direction of the flow $-F$ at $x_{k+1}^{GD}$ \emph{from $x_k$} brings us to $x_{k+1}^{EG}$, which is closer to $x^{\star}$ than $x_k$. This is the general logic behind the proximal point method \eqref{eq:lec3-prox-point-update}, and is precisely what the extragradient algorithm does (write $\Tilde{x}_{k+1}=x_{k+1}^{GD}$ and $x_{k+1}=x_{k+1}^{EG}$ in Algorithm \ref{alg:lec3-extragradient} to match the notation in this figure).
    Hence the superscript in $x^{EG}_{k+1}$.
    }
    \label{fig:lec3-EG}
\end{figure}
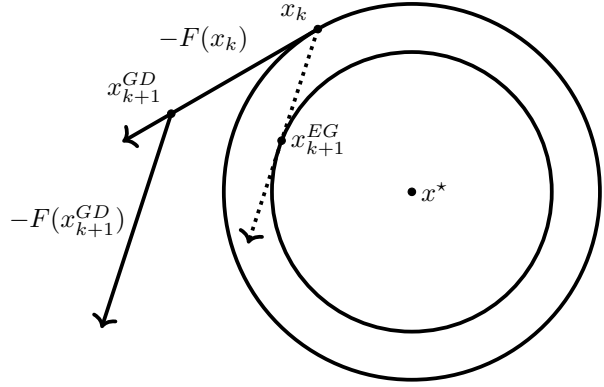

Now we can state a theorem that connects the properties of $F$, $B$ and $2B-I$.
\begin{theorem}
\label{thm:lec3-F-and-B}
    The following are equivalent.
    \begin{itemize}
        \item $F$ is a monotone operator.
        \item There exists a firmly non-expansive operator $B$ such that $B$ is the resolvent of $F$.
        \item The operator $2B-I$ is non-expansive.
    \end{itemize}
\end{theorem}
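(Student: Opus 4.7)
The plan is to establish the three-way equivalence via the cycle $(1)\!\iff\!(2)\!\iff\!(3)$, since both links reduce to short algebraic identities once the definition of firm non-expansivity is in hand. Recall that $B$ is \emph{firmly non-expansive} if $\|B(x)-B(y)\|^2\leq\langle B(x)-B(y),\,x-y\rangle$ for all $x,y$, and \emph{non-expansive} if $\|B(x)-B(y)\|\leq\|x-y\|$. I also implicitly assume that $I+\eta F$ is invertible so that $B=(I+\eta F)^{-1}$ is a well-defined single-valued operator; in the broader theory this is guaranteed by Minty's theorem under maximal monotonicity, but I will not dwell on this subtlety, in keeping with the informal tone of the excerpt.

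For the equivalence $(1)\!\iff\!(2)$, the key move is a change of variables. Set $u=B(x)$ and $v=B(y)$, so that by the definition of the resolvent, $x=u+\eta F(u)$ and $y=v+\eta F(v)$. Subtracting and taking the inner product against $u-v=B(x)-B(y)$ produces
\begin{equation*}
    \langle B(x)-B(y),\,x-y\rangle \;=\; \|B(x)-B(y)\|^2 \;+\; \eta\,\langle F(u)-F(v),\,u-v\rangle.
\end{equation*}
Since $u,v$ range over the image of $B$ as $x,y$ range over its domain, monotonicity of $F$ (the last inner product $\geq 0$) is precisely equivalent to firm non-expansivity of $B$ (the left side $\geq \|B(x)-B(y)\|^2$). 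Both directions fall out of this single identity.

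For $(2)\!\iff\!(3)$, expand using the elementary identity $\|2a-b\|^2 = 4\|a\|^2 - 4\langle a,b\rangle + \|b\|^2$ with $a=B(x)-B(y)$ and $b=x-y$:
\begin{equation*}
    \|(2B-I)(x)-(2B-I)(y)\|^2 \;=\; 4\|B(x)-B(y)\|^2 \;-\; 4\langle B(x)-B(y),\,x-y\rangle \;+\; \|x-y\|^2.
\end{equation*}
Therefore $\|(2B-I)(x)-(2B-I)(y)\|^2\leq\|x-y\|^2$ holds if and only if $\|B(x)-B(y)\|^2\leq\langle B(x)-B(y),\,x-y\rangle$, which is exactly firm non-expansivity of $B$. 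Composing the two equivalences closes the cycle.

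The main obstacle here is not computational but definitional: it is the assertion that the resolvent $B$ exists and is single-valued. All three computations above are one-line manipulations of inner products, and the hard work has already been absorbed into the identity $x=u+\eta F(u)$ used in the first step. In a fully rigorous treatment one would invoke Minty's theorem to establish surjectivity of $I+\eta F$ for maximal monotone $F$; granting that, the equivalences themselves are essentially tautological once written in coordinates.
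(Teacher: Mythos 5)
Your argument is correct, and it is worth noting up front that the paper does not prove this theorem at all---it simply cites Proposition 4.4 and Corollary 23.9 of \emph{Convex Analysis and Monotone Operator Theory in Hilbert Spaces}---so your write-up supplies the argument the text omits, and it is essentially the standard one found in that reference: the change of variables $x=u+\eta F(u)$, $y=v+\eta F(v)$ giving the identity $\langle B(x)-B(y),x-y\rangle=\|B(x)-B(y)\|^2+\eta\langle F(u)-F(v),u-v\rangle$ for $(1)\Leftrightarrow(2)$, and the expansion of $\|2a-b\|^2$ for $(2)\Leftrightarrow(3)$. Two small points deserve a sentence each if this is to stand next to the paper's definitions. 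First, the paper defines firm non-expansivity by the sum condition $\|B(x)-B(y)\|^2+\|(I-B)(x)-(I-B)(y)\|^2\leq\|x-y\|^2$, whereas you work with the inner-product characterization $\|B(x)-B(y)\|^2\leq\langle B(x)-B(y),x-y\rangle$; these are equivalent by expanding the second square, but the one-line expansion should be stated so the proof matches the definition actually in use. Second, you correctly flag that the existence and single-valuedness of $B=(I+\eta F)^{-1}$ on the whole space is the real content hiding in the ``there exists a firmly non-expansive resolvent'' clause: monotonicity alone gives that $I+\eta F$ is injective-up-to-the-identity in the sense needed for the identity, but full domain of $B$ requires maximal monotonicity via Minty's theorem, which is exactly the role of Corollary 23.9 in the cited source. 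With those caveats acknowledged, as you do, the equivalences themselves are the short algebraic manipulations you give, and the proof is sound.
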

This theorem and its proof can be found in \cite{bauschke2011convex}
(see Proposition 4.4 and Corollary 23.9 therein).

The utility of Theorem \ref{thm:lec3-F-and-B} for our purposes is that given a monotone
operator $F$, we are guaranteed that the resolvent exists, and so we can always construct a proximal point method for $F$. 
Since the resolvent is also firmly non-expansive,
we are further guaranteed that the proximal point method will converge.
To see why, and for completeness, we end this section with the 
definition of firm non-expansivity.
\begin{definition}[Firm Non-expansivity]
An operator $B$ is \emph{firmly non-expansive} on a set $\mathcal{X}$ if
\begin{equation}
    \left|\left|B(x)-B(y)\right|\right|^2 + \left|\left|(I-B)(x)-(I-B)(y)\right|\right|^2 \leq \left|\left|x-y\right|\right|^2 \,\, \forall x,y\in\mathcal{X}.
    \label{eq:lec3-firmly-nonexpansive}
\end{equation}
\end{definition}
In words, firm non-expansivity guarantees that applying the operator to
two points strictly decreases the distance between. The amount of the decrease is 
given by the second term on the left-hand side of \eqref{eq:lec3-firmly-nonexpansive}.

\subsubsection{The Extragradient Algorithm}

Let us now address the implicit nature of the proximal point update
\eqref{eq:lec3-prox-point-update}
by giving a practical algorithm that approximates it---the extrapolated
gradient algorithm \cite{korpelevich1976extragradient}, or, as 
it is often abbreviated, the \emph{extragradient algorithm}.
The extragradient algorithm is a two-step method in which we compute
an intermediate iterate $\Tilde{x}_k$ by applying a gradient-descent-like 
update to $x_k$, and then apply another such update to arrive at 
$x_{k+1}$.
In the second update step, $F$ is evaluated at $\Tilde{x}_k$.
This is illustrated in Fig.\,\ref{fig:lec3-EG}.

\begin{algorithm}
\caption{Extragradient Method}
\label{alg:lec3-extragradient}
\begin{algorithmic}
\Require $\eta > 0$\\
\State $x_{k+1} = x_k - \eta F(\Tilde{x}_k),$ where\\
\\
$\Tilde{x}_k = x_k - \eta F(x_k)$
\end{algorithmic}
\end{algorithm}

The extragradient update can explicitly be shown to be an approximation
to the proximal point update \cite{mokhtari2020unified}, but we will not
do that here. Nonetheless, we will leverage the idea to prove convergence of 
the extragradient method by bounding the discrepancy between the iterates
it produces and the iterates that would be produced if we could exactly
solve the proximal point update.

We give the proof of convergence of the extragradient method in the 
case where $F$ is strongly monotone (so we can easily get a rate) and $L$-Lipschitz.
\begin{align}
    \left|\left|x_{k+1}-x^{\star}\right|\right|^2
    &= \left|\left|x_k-x^{\star}\right|\right|^2
    -2\eta\langle F(\Tilde{x}_k), x_k-x^{\star} \rangle
    +\eta^2 \left|\left| F(\Tilde{x}_k)\right|\right|^2
    \nonumber\\
    &\overset{(a)}{=} \left|\left|x_k-x^{\star}\right|\right|^2
    -2\eta\langle F(\Tilde{x}_k), \Tilde{x}_k - x^{\star} \rangle
    -2\eta\langle F(\Tilde{x}_k), x_k - \Tilde{x}_k \rangle
    +\eta^2 \left|\left| F(\Tilde{x}_k)\right|\right|^2
    \nonumber\\
    &\overset{(b)}{\leq} \left|\left|x_k-x^{\star}\right|\right|^2
    -2\eta\mu \left|\left|\Tilde{x}_k - x^{\star}\right|\right|^2
    +\left|\left|x_k-\Tilde{x}_k-\eta F(\Tilde{x}_k)\right|\right|^2
    -\left|\left|x_k-\Tilde{x}_k\right|\right|^2
    \nonumber\\
    &\overset{(c)}{\leq} \left|\left|x_k-x^{\star}\right|\right|^2
    -2\eta\mu \left|\left|\Tilde{x}_k - x^{\star}\right|\right|^2
    +\left(\eta^2 L^2 - 1\right)\left|\left|x_k-\Tilde{x}_k\right|\right|^2
    \nonumber\\
    &\overset{(d)}{\leq} \left(1-\eta\mu\right) \left|\left|x_k-x^{\star}\right|\right|^2
    + \left(\eta^2 L^2 - 1 + 2\eta\mu\right)\left|\left|x_k-\Tilde{x}_k\right|\right|^2
    \nonumber\\
    &\overset{(e)}{\leq} \left(1-\frac{\mu}{2(L+\mu)}\right) \left|\left|x_k-x^{\star}\right|\right|^2 \,\,\,\text{for}\,\, \eta=\frac{1}{2(L+\mu)}
    \nonumber\\
    &\overset{(f)}{\leq} \left(1-\frac{\mu}{4L}\right) \left|\left|x_k-x^{\star}\right|\right|^2.
\end{align}
In (a), we added and subtracted
$2\eta\langle F(\Tilde{x}_k), \Tilde{x}_k\rangle$, our usual trick. 
In (b), we applied the strong monotonicity property to the second term, and
completed the square with the last two terms of the previous line.
To arrive at the third term in (c), we manipulated the third term in (b) as follows: first, we
used the dynamics of the algorithm to replace $x_k-\Tilde{x}_k$ with $\eta F(x_k)$,
second, we applied the Lipschitz property to the quantity $||F(x_k)-F(\Tilde{x}_k)||^2$,
and third, we grouped the resulting term, $\eta^2 L^2 ||x_k-\Tilde{x}_k||^2$, with the last
term of line (b).
To arrive at (d), we applied the triangle inequality, 
$-2\eta\mu||a||^2 \leq -\eta\mu||a+b||^2 + 2\eta\mu||b||^2$,
with $a=\Tilde{x}_k - x_k + x_k - x^{\star}$ and $b=x_k-\Tilde{x}_k$.
In (e), we took $\eta=[2(L+\mu)]^{-1}$. This ensures that the
prefactor of the error term $||x_k-\Tilde{x}_k||^2$ in (d)
is negative, and that the term can therefore be dropped.
Note that with this assignment for the step size, line (d) is a descent
lemma for the extragradient method.
Lastly, to simplify the rate of convergence in (f), we used 
the fact that $2L\geq L+\mu$, which is easily proved as follows:
\begin{equation}
    \mu
    \leq\frac{\langle F(x)-F(y), x-y\rangle}{||x-y||^2}
    \leq\frac{|| F(x)-F(y)||}{||x-y||}
    \leq L.
    \label{eq:lec3-mu-is-less-than-L}
\end{equation}
In \eqref{eq:lec3-mu-is-less-than-L}, the first inequality
follows from strong monotonicity, the second from the Cauchy-Schwarz
inequality, and the third from the Lipschitz property of $F$.

We have proved the following theorem:
\begin{theorem}
    For a $\mu$-strongly monotone and $L$-Lipschitz operator $F$, 
    the iterates of Algorithm \ref{alg:lec3-extragradient} with step size 
    $\eta=1/[2(\mu+L)]$ converge to the fixed point of $F$ at the linear
    rate $1-\mu/(4L)$. In particular,
    \begin{equation}
        \left|\left|x_{k+1}-x^{\star}\right|\right|^2
        \leq
        \left(1-\frac{\mu}{4L}\right)^t \left|\left|x_1-x^{\star}\right|\right|^2.
    \end{equation}
\end{theorem}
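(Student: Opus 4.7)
The plan is to mirror the template we already used for the proximal point method: expand $\|x_{k+1}-x^\star\|^2$, exploit strong monotonicity at the point where $F$ is actually evaluated (namely $\tilde{x}_k$), and then control the discrepancy $x_k-\tilde{x}_k$ using the Lipschitz property together with the definition of the extrapolation step. Without loss of generality I will assume $F(x^\star)=0$, as in the proximal point derivation.

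First I would write
\begin{equation*}
\|x_{k+1}-x^\star\|^2 = \|x_k-x^\star\|^2 - 2\eta\langle F(\tilde{x}_k),\, x_k-x^\star\rangle + \eta^2 \|F(\tilde{x}_k)\|^2,
\end{equation*}
and then split the cross term by adding and subtracting $\langle F(\tilde{x}_k),\tilde{x}_k\rangle$, so that the inner product breaks into $\langle F(\tilde{x}_k),\tilde{x}_k-x^\star\rangle + \langle F(\tilde{x}_k), x_k-\tilde{x}_k\rangle$. On the first of these pieces I would apply $\mu$-strong monotonicity to get $-2\eta\mu\|\tilde{x}_k-x^\star\|^2$. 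On the second, I would complete the square to turn $-2\eta\langle F(\tilde{x}_k), x_k-\tilde{x}_k\rangle + \eta^2\|F(\tilde{x}_k)\|^2$ into $\|x_k-\tilde{x}_k-\eta F(\tilde{x}_k)\|^2 - \|x_k-\tilde{x}_k\|^2$.

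Next I would replace $x_k-\tilde{x}_k$ in the squared-norm term by $\eta F(x_k)$, using the first line of Algorithm \ref{alg:lec3-extragradient}, so that $\|x_k-\tilde{x}_k-\eta F(\tilde{x}_k)\|^2 = \eta^2\|F(x_k)-F(\tilde{x}_k)\|^2$, and then invoke the $L$-Lipschitz property to bound this by $\eta^2 L^2\|x_k-\tilde{x}_k\|^2$. This leaves the coefficient $(\eta^2 L^2-1)$ in front of $\|x_k-\tilde{x}_k\|^2$. To fold the $\|\tilde{x}_k-x^\star\|^2$ term back onto $\|x_k-x^\star\|^2$, I would apply the elementary inequality $-\|a\|^2 \le -\tfrac{1}{2}\|a+b\|^2 + \|b\|^2$ with $a=\tilde{x}_k-x^\star$ and $b=x_k-\tilde{x}_k$, which produces $(1-\eta\mu)\|x_k-x^\star\|^2$ plus an extra $2\eta\mu\|x_k-\tilde{x}_k\|^2$. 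Combining these contributions gives the clean inequality
\begin{equation*}
\|x_{k+1}-x^\star\|^2 \le (1-\eta\mu)\|x_k-x^\star\|^2 + (\eta^2 L^2 - 1 + 2\eta\mu)\|x_k-\tilde{x}_k\|^2.
\end{equation*}

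The main obstacle, and really the only delicate step, is picking $\eta$ so that the coefficient of the error term $\|x_k-\tilde{x}_k\|^2$ is nonpositive while keeping $(1-\eta\mu)$ as small as possible. Choosing $\eta=1/[2(L+\mu)]$ makes the error coefficient at most zero (one checks $\eta^2 L^2 + 2\eta\mu \le 1$ using $\mu\le L$, which itself follows from Cauchy--Schwarz applied to strong monotonicity and the Lipschitz bound), so we may discard the error term. The remaining factor $1-\mu/[2(L+\mu)]$ can then be loosened to $1-\mu/(4L)$ via $L+\mu\le 2L$, yielding the advertised linear rate upon iterating the one-step contraction.
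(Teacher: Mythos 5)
Your proposal is correct and follows essentially the same route as the paper's proof: the same expansion of $\|x_{k+1}-x^\star\|^2$, the same add-and-subtract of $\langle F(\tilde{x}_k),\tilde{x}_k\rangle$, strong monotonicity at $\tilde{x}_k$, completing the square, the Lipschitz bound on $\|F(x_k)-F(\tilde{x}_k)\|$, the elementary inequality converting $-2\eta\mu\|\tilde{x}_k-x^\star\|^2$ into $-\eta\mu\|x_k-x^\star\|^2+2\eta\mu\|x_k-\tilde{x}_k\|^2$, the choice $\eta=1/[2(L+\mu)]$ to kill the error term, and the final loosening via $\mu\le L$ (itself justified exactly as in the paper, by Cauchy--Schwarz). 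No gaps; nothing further is needed.
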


The extragradient algorithm is one practical instantiation of the
proximal point method. There are others. We will mention some of them 
in the next section. We end this section by noting that
one way to construct approximate proximal point algorithms is by 
expanding the resolvent of $F$ in powers of $\eta$, truncating the 
expansion after a fixed number of terms $m$, and then rearranging 
the result to solve for $x_{k+1}$.
The accuracy of the approximation to the proximal point update can be
controlled by increasing $m$.

\subsubsection{High-Resolution Continuous-Time Limits}
\label{sec:lec3-highres-ODE}

We studied the phenomenon of acceleration in optimization from a 
continuous-time perspective in 
Section \ref{sec:lec2-cnts-time-acceleration}. 
We end these notes by illustrating with an example the 
value of thinking in continuous time about fixed-point problems.

We return to the two-player zero-sum game \eqref{eq:lec2-zero-sum-game}
to contextualize the discussion, which is based on work done in
\cite{chavdarova2022lastiterate}. We remind the reader that 
the vector field $F$ of the game \eqref{eq:lec2-zero-sum-game}
is given in \eqref{eq:lec2-two-player-game-vector-field}.
There are several algorithms that can be used to compute the fixed 
points of $F$. 
We consider four of them: (1) gradient-descent-ascent (GDA), 
(2) the extragradient algorithm (EG), (3) optimistic GDA (OGDA), and 
(4) the lookahead algorithm (LA) \cite{chavdarova2020taming}.
Writing the state vector of the two players $x_1$ and $x_2$ together as 
$z=\begin{pmatrix}x_1\\x_2\end{pmatrix}$,
the update rules for all four algorithms are given below:
\begin{align}
    \text{GDA}: \,\,\,& z_{k+1} = z_k - \gamma F(z_k),
    \label{eq:lec3-GDA}\\
    \text{EG}: \,\,\,& z_{k+1} = z_k - \gamma F(\Tilde{z}_k),
    \label{eq:lec3-EG}\\
    \text{OGDA}: \,\,\,& z_{k+1} = z_k - 2\gamma F(z_k) + \gamma F(z_{k-1}),\label{eq:lec3-OGDA}\\
    \text{LA}: \,\,\,& z_{k+1}  = z_k + \alpha\left(\Tilde{z}_{k+\ell}-z_{k}\right), \, \, \alpha\in (0,1].\label{eq:lec3-LA}
\end{align}
In \eqref{eq:lec3-EG}, $\Tilde{z}_k$ is computed as in Algorithm \ref{alg:lec3-extragradient}.
In \eqref{eq:lec3-LA}, the $\Tilde{z}_{k+\ell}$ is computed
by applying $\ell$ iterations of some base algorithm. 
Here, we will take the base algorithm to be GDA.

Now we might ask the following questions---what are the relative merits
of each of these algorithms and how might we pick the most appropriate
among them in the context of a specific problem?
We have seen previously how taking a continuous-time perspective
in optimization can be productive of insight into such questions, and
so we take that perspective again here.
An interesting technical roadblock arises, however, which is that
when we naively take the continuous-time
limits of \eqref{eq:lec3-GDA}-\eqref{eq:lec3-LA}, 
they all collapse to a single differential equation.
This is particularly problematic because it is known that GDA can
diverge on two-player zero-sum games \cite{daskalakis2018training,chavdarova2022lastiterate},
but that EG, OGDA, and LA do not.
The continuous-time limit of these algorithms therefore fails to 
distinguish between them even though they display distinct convergence 
behaviors in discrete time.

The solution to this problem is to take more careful continuous-time
limits. We take what is called a \emph{high-resolution limit}, which 
has its roots in the study of hydrodynamics, and is useful to study
systems that contain multiple timescales. Here is the
basic idea: Typically, we are faced with an algorithm of the form
$x_{n+1} = G(x_n, \eta)$ where $G$ is the update rule and $\eta$
is the step size. In order to arrive at a continuous-time representation
$x(t)$, we make the association $t=n\eta$ and take the limit 
$\eta\rightarrow 0$. The idea behind a high-resolution limit is to 
instead take $t=n g(\eta)$ where $g$ is a function (other than the 
identity function) of $\eta$.

The high-resolution continuous-time limits of GDA, EG, OGDA, and LA
\emph{are} distinct. Writing $\beta=2/\eta$, they are
\begin{align}
    \text{GDA}: \,\,\,& \ddot{z}(t) = -\beta\dot{z}(t) - \beta F(z(t)),\label{eq:lec3-GDA-highres}\\
    \text{EG}: \,\,\,& \ddot{z}(t) = -\beta\dot{z}(t) - \beta F(z(t)) + 2 J(z(t))F(z(t)),\label{eq:lec3-EG-highres}\\
    \text{OGDA}: \,\,\,& \ddot{z}(t) = -\beta\dot{z}(t) - \beta F(z(t)) - 2 J(z(t))\dot{z}(t),\label{eq:lec3-OGDA-highres}\\
    \text{LA,} \,\ell=2: \,\,\,& \ddot{z}(t) = -\beta\dot{z}(t) - 2\alpha\beta F(z(t)) + 2 \alpha J(z(t))F(z(t)).\label{eq:lec3-LA-highres}
\end{align}
In \eqref{eq:lec3-EG-highres}-\eqref{eq:lec3-LA-highres}, we note
the appearance of the Jacobian $J$ of the vector field,
\begin{equation}
    J(z)
    =
    \begin{bmatrix}
    \nabla^2_{x_1}f(z) & \nabla_{x_2}\nabla_{x_1}f(z)\\
    -\nabla_{x_1}\nabla_{x_2}f(z) & -\nabla^2_{x_2}f(z)
    \end{bmatrix},
    \label{eq:lec3-two-player-game-J}
\end{equation}
which is not present in the high-resolution continuous-time limit
of GDA. This suggests that the three convergent algorithms all leverage
information contained in the Jacobian of $F$ in order to converge.
It is also interesting that the continuous-time limit of LA is $\ell$-dependent. 
The structure of the continuous-time equations \eqref{eq:lec3-GDA-highres}-\eqref{eq:lec3-LA-highres} thus immediately give clues to understanding the
different convergence properties of the various algorithms and motivates
new questions for future research.

\bibliographystyle{unsrt}
\bibliography{bibliography.bib}

\end{document}